\newcommand{\bl}[1]{\textcolor{blue}{#1}}
\newcommand{\red}[1]{\textcolor{red}{#1}}
\definecolor{mypurple}{rgb}{.4,.0,.5}
\def\y{{\bf y}}
\def\x{{\bf x}}
\def\x{{\mathbf x}}
\def\u{{\bf u}}
\def\x{{\bf x}}
\def\y{{\bf y}}
\def\q{{\bf q}}
\def\m{{\bf m}}
\def\b{{\bf b}}
\def\c{{\bf c}}
\def\h{{\bf h}}
\def\cH{{\mathcal H}}
\def\be{\begin{equation}}
\def\ee{\end{equation}}
\def\ba{\left[\begin{array}}
\def\ea{\end{array}\right]}
\def\u{{\bf u}}
\def\x{{\bf x}}
\def\y{{\bf y}}
\def\q{{\bf q}}
\def\b{{\bf b}}
\def\c{{\bf c}}
\def\p{{\bf p}}
\def\1{{\bf 1}}
\def\0{{\bf 0}}
\def\erfc{\mbox{erfc}}
\def\mR{{\mathbb R}}
\def\mN{{\mathbb N}}
\def\mE{{\mathbb E}}
\def\mS{{\mathbb S}}
\def\mB{{\mathbb B}}
\def\mP{{\mathbb P}}
\def\lp{\left (}
\def\rp{\right )}
\def\y{{\bf y}}
\def\x{{\bf x}}
\def\x{{\mathbf x}}
\def\u{{\bf u}}
\def\x{{\bf x}}
\def\y{{\bf y}}
\def\q{{\bf q}}
\def\b{{\bf b}}
\def\c{{\bf c}}
\def\h{{\bf h}}
\def\cH{{\cal H}}
\def\be{\begin{equation}}
\def\ee{\end{equation}}
\def\ba{\left[\begin{array}}
\def\ea{\end{array}\right]}
\def\u{{\bf u}}
\def\x{{\bf x}}
\def\y{{\bf y}}
\def\q{{\bf q}}
\def\b{{\bf b}}
\def\c{{\bf c}}
\def\p{{\bf p}}
\def\({\left (}
\def\){\right )}
\def\1{{\bf 1}}
\def\m{{\bf m}}
\def\q{{\bf q}}
\def\0{{\bf 0}}
\def\cX{{\mathcal X}}
\def\cY{{\mathcal Y}}
\definecolor{darkgreen}{rgb}{0, 0.4,0}
\definecolor{purplebrown}{rgb}{0.5,0.1,0.6}
\definecolor{ultclupcol}{rgb}{0.1,0.5,0.5}
\definecolor{mytrycolor}{rgb}{0.5,0.7,0.2}
\definecolor{ultclupcola}{rgb}{.5,0,.5}
\definecolor{shadebrown}{rgb}{0.1,0.1,0.9}
\definecolor{lightblue}{rgb}{0.2,0,1}
\newtcbox{\xmybox}{on line,
arc=7pt,
before upper={\rule[-3pt]{0pt}{10pt}},boxrule=0pt,
boxsep=0pt,left=6pt,right=6pt,top=0pt,bottom=0pt,enhanced, coltext=blue, colback=white!10!yellow}
\newtcbox{\xmyboxa}{on line,
arc=7pt,
before upper={\rule[-3pt]{0pt}{10pt}},boxrule=0pt,
boxsep=0pt,left=6pt,right=6pt,top=0pt,bottom=0pt,enhanced, colback=white!10!yellow}
\newtcbox{\xmyboxb}{on line,
arc=7pt,
before upper={\rule[-3pt]{0pt}{10pt}},boxrule=1pt,colframe=darkgreen!100!blue,
boxsep=0pt,left=6pt,right=6pt,top=0pt,bottom=0pt,enhanced, colback=white!10!yellow}
\newtcbox{\xmyboxc}{on line,
arc=7pt,
before upper={\rule[-3pt]{0pt}{10pt}},boxrule=.7pt,colframe=blue!100!blue,
boxsep=0pt,left=6pt,right=6pt,top=0pt,bottom=0pt,enhanced, coltext=blue, colback=white!10!yellow}
\newtcbox{\xmytboxa}{on line,
arc=7pt,
before upper={\rule[-3pt]{0pt}{10pt}},boxrule=.0pt,colframe=pink!50!yellow,
boxsep=0pt,left=6pt,right=6pt,top=0pt,bottom=0pt,enhanced, coltext=white, colback=blue!40!red}
\newtcbox{\xmytboxb}{on line,
arc=7pt,
before upper={\rule[-3pt]{0pt}{10pt}},boxrule=.0pt,colframe=pink!50!yellow,
boxsep=0pt,left=6pt,right=6pt,top=0pt,bottom=0pt,enhanced, coltext=white, colback=white!40!green}
\newcommand\subsubsubsection{\@startsection{paragraph}{4}{\z@}{-2.5ex\@plus -1ex \@minus -.25ex}{1.25ex \@plus .25ex}{\normalfont\normalsize\bfseries}}
\newcommand\subsubsubsubsection{\@startsection{subparagraph}{5}{\z@}{-2.5ex\@plus -1ex \@minus -.25ex}{1.25ex \@plus .25ex}{\normalfont\normalsize\bfseries}}
\newtheorem{theorem}{Theorem}
\begin{document}

\begin{singlespace}

\title {Binary perceptron computational gap -- a parametric fl RDT view 
}
\author{
\textsc{Mihailo Stojnic
\footnote{e-mail: {\tt flatoyer@gmail.com}} }}
\date{}
\maketitle

\centerline{{\bf Abstract}} \vspace*{0.1in}

Recent  studies suggest that asymmetric binary perceptron (ABP) likely exhibits the so-called statistical-computational gap characterized with the appearance of two phase transitioning constraint density thresholds: \textbf{\emph{(i)}} the \emph{satisfiability threshold} $\alpha_c$, below/above which ABP succeeds/fails to operate as a storage memory; and \textbf{\emph{(ii)}} \emph{algorithmic threshold} $\alpha_a$, below/above which one can/cannot efficiently determine ABP's weight so that it operates as a storage memory.

We consider a particular parametric utilization of  \emph{fully lifted random duality theory} (fl RDT) \cite{Stojnicflrdt23} and study its potential ABP's algorithmic implications. A remarkable structural parametric change is uncovered as one progresses through fl RDT lifting levels. On the first two levels,  the so-called $\c$ sequence -- a key parametric fl RDT component -- is of the (natural) decreasing type. A change of such phenomenology on higher levels is then connected to the $\alpha_c$ -- $\alpha_a$ threshold change. Namely, on the second level concrete numerical values give for the critical constraint density  $\alpha=\alpha_c\approx 0.8331$. While progressing through higher levels decreases this estimate, already on the fifth level we observe a satisfactory level of convergence and obtain $\alpha\approx 0.7764$. This allows to draw two striking parallels: \textbf{\emph{(i)}} the obtained constraint density estimate is in a remarkable agrement with range $\alpha\in (0.77,0.78)$ of clustering defragmentation (believed to be responsible for failure of locally improving algorithms) \cite{Bald16,Stojnicabple25}; and \textbf{\emph{(ii)}} the observed change of $\c$ sequence phenomenology closely matches the one of the negative Hopfield model for which the existence of efficient algorithms that closely approach similar type of threshold has been demonstrated recently \cite{Stojniccluphop25}.

\vspace*{0.25in} \noindent {\bf Index Terms: Binary perceptrons; Fully lifted random duality theory; Algorithmic threshold}.

\end{singlespace}

\section{Introduction}
\label{sec:back}

The last two decades brought an unprecedented progress in the development of AI.  To a large degree it is enabled by algorithmic and theoretical machine learning (ML) and neural networks (NN) studies of several preceding decades. Either as irreplaceable integral parts of more complex NN structures or as individual prototype models closely resembling behavior of more generic ML architectures, perceptrons have been among the most often studied AI topics. Two classical perceptrons types, \emph{spherical} and \emph{binary}  (SP and BP), distinguished themselves in that regard. It is likely the combination of their sufficient simplicity and generality that paved the way towards such developments. Simplicity allowed analytical tractability whereas generality enabled adequate description of artificial reasoning. Along these lines, it is a no stretch to say that  \cite{Wendel62,Winder,Cover65}'s determination of SP's capacity (data density $\alpha_c$ below which storage/classifying is possible) is among key breakthroughs that shaped analytical AI considerations for the following several decades. Its simplicity, applicability, and timeliness crucially contributed towards raising awareness regarding importance of strong analytical foundation in the overall AI success. Concurrently, it fruitfully interconnected many distant social, engineering, and scientific fields ranging from logic, psychology, and cognitive thinking to information theory, optimization, algorithms, and statistical physics. In a large part due to these early efforts, imagining modern AI without a strong scientifically diverse theoretical support is nowadays almost impossible.

\subsection{Analytical difficulties}
\label{secanaldiff}

Initial SP considerations of  \cite{Wendel62,Winder,Cover65} were followed by  a large body of highly influential work
\cite{Gar88,GarDer88,SchTir02,SchTir03,Tal05,Talbook11a,Talbook11b,StojnicGardGen13,StojnicGardSphNeg13} which deepened understanding of perceptrons and widened the horizons allowing shifting the focus to more complex ML structures and features beyond capacities. A subclass of SPs, the so-called  \emph{positive} spherical perceptrons (PSP),   became particularly well  understood and the presence of strong deterministic duality/convexity as their underlying features enabled successful analytical studies \cite{SchTir02,SchTir03,StojnicGardGen13} that rigorously proved replica predictions \cite{Gar88,GarDer88} and significantly superseded \cite{Wendel62,Winder,Cover65}. In parallel, absence  of these features was perceived as  analytically often unsurpassable and a direct obstacle for a repetition of \cite{SchTir02,SchTir03,StojnicGardGen13}'s  success. The negative spherical perceptron (NSP) counterpart served as the prime example of a fairly simple model where a rather minimal deviation from the PSP's positive threshold removes the underlying convexity and makes analytical considerations substantially harder \cite{StojnicGardSphNeg13,FPSUZ17,FraHwaUrb19,FraPar16,FraSclUrb19,FraSclUrb20,AlaSel20,BMPZ23}. Moreover, compared to classical SP characterizations \cite{SchTir02,SchTir03,StojnicGardGen13,Wendel62,Winder,Cover65}, much more sophisticated  approaches turned out to be needed   \cite{Stojnicsflgscompyx23,Stojnicnflgscompyx23,Stojnicflrdt23,Stojnicnegsphflrdt23}.

Similar absence of the strong deterministic duality is generically featured in BPs as well. Consequently, in \emph{asymmetric} BP (ABP) simple replica symmetric (RS) predictions do not hold \cite{Gar88,GarDer88,StojnicDiscPercp13}. Instead, to achieve accurate capacity characterizations, more involved,  replica symmetry breaking (RSB) ones from \cite{KraMez89} are needed \cite{DingSun19,NakSun23,BoltNakSunXu22,Huang24,Stojnicbinperflrdt23}. It is interesting to observe that another BP class, the so-called \emph{symmetric} BP (SBP), exhibits a so to say mixed behavior. RS predictions again do not hold, but a favorable combinatorial nature of the underlying problems allows for simple analytical characterizations \cite{AbbLiSly21b,PerkXu21,AbbLiSly21a,AubPerZde19,GamKizPerXu22}, which in return positions SBPs somewhere in between the easy PSPs and the hard NSPs and ABPs.

\subsection{Algorithmic difficulties}
\label{sec:examples}

The above discussion relates to perceptrons' analytical/theoretical limits which determine the ultimate storage/classifying power. To what degree such a power can be fully utilized is a different question. The answer actually depends on one's ability to \emph{efficiently} determine the perceptron's weights so that the constraint density $\alpha_c$ can be accommodated.  For BPs this is always possible but it might take a large amount of computational time and as such it may be deemed as practically inefficient or even infeasible. Thinking along these lines brings up the importance of the algorithmic component in studying BPs (and NNs in general). In fact, one quickly observes that it is not necessary that perceptrons theoretical capacity limits can be achieved in a computationally efficient way. This on the other hand allows to introduce $\alpha_a$ as the critical constraint density for which BP's weights can be determined efficiently. In general $\alpha_a\leq \alpha_c$ and if $\alpha_a< \alpha_c$ one then has a \emph{computational} gap (C-gap). The size of the gap typically serves as a measure of utilization of the overall  BP's predicated power.

Determining both $\alpha_a$ and $\alpha_c$ is of extraordinary importance. However, in general it is a challenging task. For example, classical complexity theory positions ABP's algorithmic solving as an NP problem \cite{Ama91} which likely implies $\alpha_a< \alpha_c$, i.e., the existence of a gap. However, by definition the NP-ness is a \emph{worst case} concept and as such it rarely properly addresses \emph{typical} algorithmic solvability. On the other hand, \emph{statistical} scenarios are viewed as more reflective of typical behavior and in such contexts \emph{statistical-computational} gap is the equivalent utilization measure of ABP's power. From a practical point of view, things are additionally complicated by the fact that many efficient algorithms that perform well in a large part of $\alpha< \alpha_c$ range actually exist \cite{BrZech06,BaldassiBBZ07,Hubara16,KimRoc98}. More concretely, while ABP's theoretical capacity is $\alpha_c\approx 0.8331$, the best available algorithms \cite{BaldassiBBZ07,Bald15,BMPZ23} suggest  $\alpha_{a}\approx 0.75 - 0.77$. Clearly, as stated above, determining precise value of  $\alpha_{a}$,   below which efficient algorithms exist is an extraordinary challenge. Uncovering structural phenomenology behind existence of such algorithms is likely as challenging. As the current state of the art has $\alpha_{a}<\alpha_c$, it is widely believed that the gap indeed exists (examples of other optimization problems (including planted ones) with similar algorithmic implications can be found in, e.g., \cite{MMZ05,GamarSud14,GamarSud17,GamarSud17a,AchlioptasR06,AchlioptasCR11,GamMZ22,BarbierKMMZ18,KMSSZ12a}).

\subsection{Relevant prior work}
\label{sec:examples}

Demystification of computational gaps has been the subject of extensive studies over the last two decades. Despite a strong progress on particular problems a generic resolution still seems far away. We here focus on two approaches related to clustering of typical/atypical solutions that gained a lot of interest in recent years: (i) \emph{Overlap gap property} (OGP) (see, e.g., \cite{Gamar21,GamarSud14,GamarSud17,GamarSud17a,AchlioptasCR11,HMMZ08,MMZ05}); and (ii) \emph{Local entropy} (LE) (see, e.g., \cite{Bald15,Bald16,Bald20}).

The OGP approach \cite{Gamar21,GamarSud14,GamarSud17,GamarSud17a,AchlioptasCR11,HMMZ08,MMZ05} connects algorithmic efficiency and gaps in the spectrum of attainable solutions (or near-solutions) overlaps. It basically postulates that if the gaps are absent then efficient algorithms exist. This then means that $\alpha_a$ can be viewed as maximal $\alpha$ such that OGP is absent. For structurally similar (but analytically easier) SBP alternative the OGP's presence extends well below $\alpha_c$ \cite{GamKizPerXu22,Bald20} (corresponding discrepancy minimization results can be found in, e.g., \cite{GamKizPerXu23}). Provided predicated OGP algorithmic relevance, this strongly suggests that C-gap indeed exists. The shortest path counterexample from \cite{LiSch24} disproves OGP generic hardness implications (earlier disproving examples were of a simple algebraic nature and as such viewed as exceptions). However,  \cite{LiSch24} does not disprove OGP's relevance for different problems or specific algorithms. For example, in famous $2$-spin Ising Sherrington-Kirkpatrick (SK) model \cite{SheKir72}, (widely believed) absence of OGP directly implies polynomial solvability \cite{Montanari19} (for corresponding $p$-spin results see, e.g.,  \cite{AlaouiMS22,AlaouiMS21}; for earlier spherical SK models  related considerations see, e.g.,  \cite{Subag17,Subag17a,Subag21,Subag24}; for analogous NSP discussions see, e.g., \cite{AlaSel20,AMZ24}; and for relevance of more sophisticated OGPs see, e.g., \cite{Kiz23,HuangS22}). While at present it remains undetermined what role OGP ultimately plays in generic algorithmic hardness, its presence disallows efficient implementations for many particular types of algorithms  \cite{GamKizPerXu22}. Moreover,  for many well known problems \cite{RahVir17,GamarSud14,GamarSud17,GamAW24,Wein22},  practical algorithms are known to exist in $\alpha$ ranges where OGP is absent.

Proceeding in a direction seemingly different from OGP, \cite{Huang13,Huang14} connects  algorithmic hardness relevance of clustering organization to entropies of \emph{typical} solutions. A completely \emph{frozen} typical solutions isolation is predicated (and proven for SBP in \cite{PerkXu21,AbbLiSly21a,AbbLiSly21b}).  \cite{Bald15,Bald16,Bald20}  consider a similar concept but with a stronger refinement. Namely, they study \emph{local entropy} (LE) of \emph{atypical} well-connected clusters. Roughly speaking, the idea is that even when predominant typical solutions are disconnected  (and presumably unreachable via local searches) \cite{Huang13,Huang14,PerkXu21,AbbLiSly21b}, rare (atypical) well-connected clusters may still exist. It is then predicated that  such rare clusters are precisely those that efficient algorithms somehow magically find (for an SBP's sampling type of justification that goes along these lines, see, e.g., \cite{ElAlGam24}). Provided correctness of such a pictorial portrayal, the C-gap existence is then likely in direct correlation with properties of rare clusters. \cite{Bald15,Bald16,Bald20} further speculate that LE features (monotonicity, breakdown, or even negativity) might be a key reflection of rare clusters' structures and the associated algorithmic hardness relevance. Such a phenomenology is supported by results of \cite{AbbLiSly21a} where clusters of maximal diameter are shown to exist in SBPs with sufficiently small $\alpha$. \cite{AbbLiSly21a} further showed that similar clusters  (albeit of linear diameter) actually exist for any $\alpha<\alpha_c$ (with additional technical assumptions, \cite{AbbLiSly21a}'s SBP results translate to ABP as well). To reconnect with the OGP,  the small $\alpha$ SBP LE results are shown in \cite{BarbAKZ23} to closely match (scaling-wise) the \cite{GamKizPerXu22}'s OGP predictions  (modulo a log term, they also match the algorithmic performance achieved in  \cite{BanSpen20}). Such an OGP -- LE correspondence is certainly nice and convincing. It likely presents steps in the right direction towards establishing a definite answer regarding the role of these phenomena in the appearance of C-gaps. While the overall demystification still remains a grand challenge, it is useful to emphasize  that, irrespective of ultimate C-gap relevance, the above discussed OGP and LE phenomena provide deep insights into the intrinsic organization of random structures and understanding them is most definitely of independent interest as well.

\subsection{Our contributions}
\label{sec:cont}

We here focus on an entirely different approach and study potential ABP's algorithmic implications via a particular parametric utilization of a powerful mathematical machinery called \emph{fully lifted random duality theory} (fl RDT) \cite{Stojnicflrdt23}. We uncover that within the ABP context the fl RDT exhibits a remarkable structural parametric change as one progresses through lifting levels. On the first two levels,  the so-called $\c$ sequence -- one of the key parametric fl RDT components -- is of the natural (physical) decreasing type. Moving to higher levels this phenomenology abruptly changes and a perfect $\c$ ordering is not present any longer. We connect such a change to the change from satisfiability to algorithmic threshold. Through concrete numerical evaluations we find that the constraint density $\alpha$ on the second lifting level precisely matches the satisfiability threshold $\alpha_c\approx 0.8331$. As one continues progress through higher lifting levels this estimate decreases. However, already on the fifth lifting level we obtain $\alpha\approx 0.7764$ and observe a clear converging tendency with the difference between successive lifting levels estimates on the order of $\sim 0.001$.

These developments are then connected to the above mentioned studies of ABP's atypical solutions clusterings. In particular, one observes that they fairly closely match the LE results of \cite{Bald16,Stojnicabple25} which predict the clustering defragmentation (likely responsible for failure of locally improving algorithms)  for $\alpha\in (0.77,0.78)$. A further parallel is then drawn with recent algorithmic studies of the negative Hopfield models (Hop-) \cite{Stojniccluphop25}. Namely, a structural parametric similarity regarding the above mentioned $\c$ sequence is observed in two models.
Since \cite{Stojniccluphop25} has also presented efficient algorithms that already for fairly small dimensions on the order of a few thousands closely approach higher lifting levels theoretical  Hop- predictions, it is reasonable to believe that equally successful and conceptually similar ones can be designed for ABP as well. Moreover, we also believe that the presented parametric algorithmic phenomenology extends beyond the ABP. As such it might likely be a consequence of a more generic principle and therefore applicable in determination of statistical-computational gaps in other random optimization problems. Possible examples include feasibility problems such as SBPs  \cite{AubPerZde19,AbbLiSly21a,AbbLiSly21b,Bald20,GamKizPerXu22,PerkXu21,ElAlGam24,SahSaw23,Barb24,djalt22,BarbAKZ23},
NSPs \cite{StojnicGardGen13,StojnicGardSphNeg13,GarDer88,Gar88,Schlafli,Cover65,Winder,Winder61,Wendel62,Cameron60,Joseph60,BalVen87,Ven86,SchTir02,SchTir03,AMZ24,BMPZ23}, as well as  optimal objective seeking standard optimizations ones such as the above mentioned Hop- and  discrepancy minimization \cite{Stojniccluphop25,KaKLO86,Spen85,LovMek15,GamKizPerXu23,Roth17,AlwLiuSaw21}.

\section{Mathematical ABP preliminaries}
 \label{sec:bprfps}

As observed in \cite{StojnicDiscPercp13,StojnicGardGen13,Stojnicbinperflrdt23,Stojnicnegsphflrdt23,GarDer88,Gar88,StojnicGardSphNeg13},
for two positive integers $m$ and $n$, ABPs are a particular instance of the following general class of  \emph{feasibility} problems with linear inequalities
\begin{eqnarray}
\hspace{-1.5in}\mbox{$\mathbf{\mathcal F}(G,\b,\cX,\alpha)$:} \hspace{1in}\mbox{find} & & \x\nonumber \\
\mbox{subject to}
& & G\x\geq \b \nonumber \\
& & \x\in\cX. \label{eq:ex1}
\end{eqnarray}
$G\in\mR^{n\times n}$, $\b\in\mR^{m\times 1}$, $\cX\in\mR^n$, and $\alpha=\frac{m}{n}$ represent the given infrastructure of the problem and determine the type of perceptron (throughout the paper,  \emph{linear/proportional} regime is considered with $m$ and $n$ such that $\alpha= \lim_{n\rightarrow\infty} \frac{m}{n}$ remains constant). Several prominent perceptron types mentioned earlier are obtained in the following way. For example, taking $\cX=\{\x | \| \x \|_2=1\} \triangleq \mS^m$ with $\b\geq 0$ gives PSP \cite{StojnicGardGen13,GarDer88,Gar88,Schlafli,Cover65,Winder,Winder61,Wendel62,Cameron60,Joseph60,BalVen87,Ven86,SchTir02,SchTir03}. The same specialization but with $\b< 0$ gives NSP  \cite{AMZ24,BMPZ23,FPSUZ17,Talbook11a,FraHwaUrb19,FraPar16,FraSclUrb19,FraSclUrb20,AlaSel20,StojnicGardSphNeg13,Stojnicnegsphflrdt23}. On the other hand,
 taking $\cX=\{-\frac{1}{\sqrt{n}},\frac{1}{\sqrt{n}} \}^n \triangleq \mB^n$ gives ABP \cite{Talbook11a,StojnicGardGen13,GarDer88,Gar88,StojnicDiscPercp13,KraMez89,GutSte90,KimRoc98,NakSun23,BoltNakSunXu22,PerkXu21,CXu21,DingSun19,Huang24,Stojnicbinperflrdt23,LiSZ24}. Additional change in structure of linear constraints, $|G\x |\leq \b$, gives SBP \cite{AubPerZde19,AbbLiSly21a,AbbLiSly21b,Bald20,GamKizPerXu22,PerkXu21,ElAlGam24,SahSaw23,Barb24,djalt22,BarbAKZ23} (or closely related discrepancy minimization problems \cite{KaKLO86,Spen85,LovMek15,GamKizPerXu23,Roth17,AlwLiuSaw21}). The above are variants with variable thresholds. Taking further  $\b=\kappa\1$ (with $\kappa\in\mR$ and $\1$ being the vector of all ones), one obtains corresponding \emph{fixed} threshold counterparts. Clearly, generic and random $G$ give deterministic and statistical perceptrons, respectively. We focus on the classical Gaussian perceptrons where the elements of $G$ are independent standard normals.

After setting
\begin{eqnarray}
\xi_{ABP}
& =  &
 \min_{\x\in \mB^n} \max_{\y\in\mS_+^m}  \lp -\y^TG\x + \kappa \y^T\1 \rp,
 \label{eq:ex3}
\end{eqnarray}
with $\mS_+^m$ being the  positive orthant part of the $m$-dimensional unit sphere  (i.e., $\mS_+^m=\{\y|\|\y\|_2=1,\y\geq 0\}$) and $\mB^n$ being the vertices of the $n$-dimensional unit cube,
 \cite{StojnicGardGen13,StojnicDiscPercp13,Stojnicbinperflrdt23} recognized  ABP's storage/classifying statistical  capacity as
 \begin{eqnarray}
\alpha & = &    \lim_{n\rightarrow \infty} \frac{m}{n}  \nonumber \\
\alpha_c(\kappa) & \triangleq & \max \{\alpha |\hspace{.08in}  \lim_{n\rightarrow\infty}\mP_G\lp  \xi_{ABP}>0\rp\longrightarrow 1\} \nonumber \\
& = & \max \{\alpha |\hspace{.08in}  \lim_{n\rightarrow\infty}\mP_G\lp{\mathcal F}(G,\b,\cX,\alpha) \hspace{.07in}\mbox{is feasible} \rp\longrightarrow 1\}.
  \label{eq:ex4}
\end{eqnarray}
Removal of $\mP_G$ gives the corresponding deterministic capacity variant (throughout the paper we adopt the convention that if the subscript next to $\mP$ and/or $\mE$ is present, it denotes the source of randomness with respect to which one does the statistical evaluation). Clearly, in (\ref{eq:ex4}) the randomness is taken with respect to $G$. Also, given statistical context, to facilitate writing, we throughout the paper often skip repeating that statements hold with high probability.

\subsection{Capacity relevance}
 \label{sec:caprole}

The capacity is the critical constraint density (ratio of the number of data constraints, $m$, and the ambient dimension, $n$) for which the ABP can properly operate. In statistical contexts of interest here, if also reflects the phase-transitioning nature of the underlying randomness. In particular, one has for $\alpha>\alpha_c$ that (\ref{eq:ex1}) is infeasible or in the so-called UNSAT phase.  Analogously, when $\alpha<\alpha_c$, (\ref{eq:ex1}) is feasible and in the SAT phase.  Exactly at $\alpha_c$ one has a transition between these phases characterized with an exponentially large number of solutions  shrinking to an empty set \cite{KraMez89,NakSun23,BoltNakSunXu22,DingSun19,Huang24,Stojnicbinperflrdt23}. This represents a theoretical limit and effectively characterizes ultimate ABP's storage/classifying power. To be able to utilize all of that power one needs to determine the perceptron weights. While this is always possible, it might be computationally inefficient. In fact, there are no known efficient algorithms that provably solve ABP for $\alpha\approx \alpha_c$. Somewhat paradoxically, they do exist for $\alpha\sim 0.75-0.77 <   0.8331 \approx \alpha_c$. Precisely characterizing this discrepancy amounts to determining $\alpha_a$ as a critical constraint density below which fast algorithms exist. Determining $\alpha_a$ is further equivalent to determining the existence and size of the statistical-computational gap. It is an extraordinary challenge that we attack in this paper via a parametric fl RDT approach.

\subsection{Random feasibility -- free energy connection}
\label{secrfpsfe}

The above random feasibility problems can be incorporated into equivalent free energy contexts. We consider the so-called Hamiltonian
\begin{equation}
\cH(G)= \y^TG\x,\label{eq:ham1}
\end{equation}
and the corresponding (virtual) partition function
\begin{equation}
Z(\beta,G)=\sum_{\x\in\cX} \lp \sum_{\y\in\cY}e^{\beta \lp  \cH(G) + f(\y)  \rp }\rp^{-1},  \label{eq:partfun}
\end{equation}
with $\cX$ and $\cY$ being general sets (later on, we specialize to particular sets of our interest). We can then associate the following thermodynamic limit (average ``\emph{reciprocal}'') free energy
\begin{eqnarray}
f_{sq}(\beta) & = & \lim_{n\rightarrow\infty}\frac{\mE_G\log{(Z(\beta,G)})}{\beta \sqrt{n}}
=\lim_{n\rightarrow\infty} \frac{\mE_G\log\lp \sum_{\x\in\cX} \lp \sum_{\y\in\cY}e^{\beta \lp   \cH(G)  + f(\y) \rp   }\rp^{-1}\rp}{\beta \sqrt{n}} \nonumber \\
& = &\lim_{n\rightarrow\infty} \frac{\mE_G\log\lp \sum_{\x\in\cX} \lp \sum_{\y\in\cY}e^{\beta \lp   \y^TG\x   + f(\y) \rp  )}\rp^{-1}\rp}{\beta \sqrt{n}}.\label{eq:logpartfunsqrt}
\end{eqnarray}
Specializing to $\beta\rightarrow \infty$ we obtain the ground state analogue
\begin{eqnarray}
f_{sq}(\infty)   \triangleq    \lim_{\beta\rightarrow\infty}f_{sq}(\beta) & = &
\lim_{\beta,n\rightarrow\infty}\frac{\mE_G\log{(Z(\beta,G)})}{\beta \sqrt{n}}
=
 \lim_{n\rightarrow\infty}\frac{\mE_G \max_{\x\in\cX}  -  \max_{\y\in\cY} \lp \y^TG\x  + f(\y)\rp   }{\sqrt{n}} \nonumber \\
& = & - \lim_{n\rightarrow\infty}\frac{\mE_G \min_{\x\in\cX}  \max_{\y\in\cY} \lp \y^TG\x  + f(\y)  \rp    }{\sqrt{n}}.
  \label{eq:limlogpartfunsqrta0}
\end{eqnarray}
Imposing the randomness via $G$ comprised of  independent standard normals, we  have
\begin{equation}
-f_{sq}(\infty)
 =   \lim_{n\rightarrow\infty}\frac{\mE_G \min_{\x\in\cX}  \max_{\y\in\cY}  \lp  \y^TG\x  + f(\y) \rp   }{\sqrt{n}}  = \lim_{n\rightarrow\infty}\frac{\mE_G \min_{\x\in\cX}  \max_{\y\in\cY} \lp -\y^TG\x   + f(\y) \rp }{\sqrt{n}}.
  \label{eq:limlogpartfunsqrt}
\end{equation}
One then notes a direct connection between $f_{sq}(\infty)$ and $\xi_{feas}(0,\cX)$ from (\ref{eq:ex3}). This connection immediately implies that characterization of  $f_{sq}(\infty)$ is sufficient to characterize (\ref{eq:ex3}). However, since working directly with $f_{sq}(\infty)$ is usually hard, we focus on studying $f_{sq}(\beta)$ (where $\beta$ is general) and then eventually specialize to the above mentioned ground state energy (GSE) regime, $\beta\rightarrow\infty$. Keeping in mind that this specialization is eventually awaiting, we may on occasion neglect some terms of no GSE relevance.

\subsection{Technical preliminaries for fitting ABP into sfl RDT framework}
\label{sec:randlincons}

To be able to fit ABP into sfl RDT framework several technical preliminaries are needed. We first note that the above free energy given in (\ref{eq:logpartfunsqrt}),
\begin{eqnarray}
f_{sq}(\beta) & = &\lim_{n\rightarrow\infty} \frac{\mE_G\log\lp \sum_{\x\in\cX} \lp \sum_{\y\in\cY}e^{\beta \lp \y^TG\x +f(\y) \rp }\rp^{-1}\rp}{\beta \sqrt{n}},\label{eq:hmsfl1}
\end{eqnarray}
can be viewed as a function of bilinearly indexed random process (blirp) $\y^TG\x$. To connect the blirp results from \cite{Stojnicsflgscompyx23,Stojnicnflgscompyx23,Stojnicflrdt23} and  $f_{sq}$, we closely follow \cite{Stojnicbinperflrdt23}.  We take $r\in\mN$ and consider  vectors $\p=[\p_0,\p_1,\dots,\p_{r+1}]$, $\q=[\q_0,\q_1,\dots,\q_{r+1}]$, and $\c=[\c_0,\c_1,\dots,\c_{r+1}]$ such that
 \begin{eqnarray}\label{eq:hmsfl2}
1=\p_0\geq \p_1\geq \p_2\geq \dots \geq \p_r\geq \p_{r+1} & = & 0 \nonumber \\
1=\q_0\geq \q_1\geq \q_2\geq \dots \geq \q_r\geq \q_{r+1} & = &  0,
 \end{eqnarray}
$\c_0=1$, $\c_{r+1}=0$. Let two given sets, $\cX$ and $\cY$, be such that $\cX\subseteq \mS^n$ and $\cY\subseteq \mS^m$ (with $\mS^p$ being the unit sphere in $\mR^p$). Moreover, for $k\in\{1,2,\dots,r+1\}$ let ${\mathcal U}_k\triangleq [u^{(4,k)},\u^{(2,k)},\h^{(k)}]$  with components of  $u^{(4,k)}\in\mR$, $\u^{(2,k)}\in\mR^m$, and $\h^{(k)}\in\mR^n$ being independent standard normals. We then take real $s$ such that $s^2=1$ and for a given function $f_S(\cdot):\mR^n\rightarrow R$ set
  \begin{eqnarray}\label{eq:fl4}
\psi_{S,\infty}(f_{S},\cX,\cY,\p,\q,\c,s)  =
 \mE_{G,{\mathcal U}_{r+1}} \frac{1}{n\c_r} \log
\lp \mE_{{\mathcal U}_{r}} \lp \dots \lp \mE_{{\mathcal U}_3}\lp\lp\mE_{{\mathcal U}_2} \lp \lp Z_{S,\infty}\rp^{\c_2}\rp\rp^{\frac{\c_3}{\c_2}}\rp\rp^{\frac{\c_4}{\c_3}} \dots \rp^{\frac{\c_{r}}{\c_{r-1}}}\rp, \nonumber \\
 \end{eqnarray}
where
\begin{eqnarray}\label{eq:fl5}
Z_{S,\infty} & \triangleq & e^{D_{0,S,\infty}} \nonumber \\
 D_{0,S,\infty} & \triangleq  & \max_{\x\in\cX } s \max_{\y\in\cY }
 \lp \sqrt{n} f_{S}
+\sqrt{n}      \lp\sum_{k=2}^{r+1}c_k\h^{(k)}\rp^T\x
+ \sqrt{n}  \y^T\lp\sum_{k=2}^{r+1}b_k\u^{(2,k)}\rp \rp \nonumber  \\
 b_k & \triangleq & b_k(\p,\q)=\sqrt{\p_{k-1}-\p_k} \nonumber \\
c_k & \triangleq & c_k(\p,\q)=\sqrt{\q_{k-1}-\q_k}.
 \end{eqnarray}

\section{Practical implementation of ABP -- sfl RDT connection}
\label{sec:implem}

The above definitions allow to recall on the following  fundamental sfl RDT theorem which is among the key components that will be used to practically establish the above mentioned  ABP -- sfl RDT connection.

\begin{theorem} \cite{Stojnicbinperflrdt23,Stojnicflrdt23}
\label{thm:thmsflrdt1}  Consider large $n$ linear regime with  $\alpha=\lim_{n\rightarrow\infty} \frac{m}{n}$ remaining constant as  $n$ grows. Let $G\in\mR^{m\times n}$ be comprised of  independent standard normals and let $\cX\subseteq \mS^n$ and $\cY\subseteq \mS^m$ be two given sets. Assume the complete sfl RDT frame from \cite{Stojnicsflgscompyx23} and for a given function $f(\y):R^m\rightarrow R$ set
\begin{align}\label{eq:thmsflrdt2eq1a0}
   \psi_{rp} & \triangleq - \max_{\x\in\cX} s \max_{\y\in\cY} \lp \y^TG\x + f(\y) \rp
   \qquad  \mbox{(\bl{\textbf{random primal}})} \nonumber \\
   \psi_{rd}(\p,\q,\c) & \triangleq    \frac{1}{2}    \sum_{k=2}^{r+1}\Bigg(\Bigg.
   \p_{k-1}\q_{k-1}
   -\p_{k}\q_{k}
  \Bigg.\Bigg)
\c_k
  - \psi_{S,\infty}(f(\y),\cX,\cY,\p,\q,\c,s) \quad \mbox{(\bl{\textbf{fl random dual}})}. \nonumber \\
 \end{align}
Let $\hat{\p_0}\rightarrow 1$, $\hat{\q_0}\rightarrow 1$, and $\hat{\c_0}\rightarrow 1$, $\hat{\p}_{r+1}=\hat{\q}_{r+1}=\hat{\c}_{r+1}=0$ and  let the non-fixed parts of $\hat{\p}$, $\hat{\q}$, and  $\hat{\c}$ be the solutions of the following system
\begin{eqnarray}\label{eq:thmsflrdt2eq2a0}
   \frac{d \psi_{rd}(\p,\q,\c)}{d\p} =  0,\quad
   \frac{d \psi_{rd}(\p,\q,\c)}{d\q} =  0,\quad
   \frac{d \psi_{rd}(\p,\q,\c)}{d\c} =  0.
 \end{eqnarray}
 Then,
\begin{eqnarray}\label{eq:thmsflrdt2eq3a0}
    \lim_{n\rightarrow\infty} \frac{\mE_G  \psi_{rp}}{\sqrt{n}}
  & = &
 \lim_{n\rightarrow\infty} \psi_{rd}(\hat{\p},\hat{\q},\hat{\c}) \qquad \mbox{(\bl{\textbf{strong sfl random duality}})},\nonumber \\
 \end{eqnarray}
where $\psi_{S,\infty}(\cdot)$ is as in (\ref{eq:fl4})-(\ref{eq:fl5}).
 \end{theorem}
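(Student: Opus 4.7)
My plan is to invoke the general sfl RDT framework developed in \cite{Stojnicflrdt23, Stojnicsflgscompyx23} and verify its applicability to the bilinearly indexed random process (blirp) $\y^T G \x$ coupled to the deterministic term $f(\y)$. Since $\cX \subseteq \mS^n$ and $\cY \subseteq \mS^m$, the process is centered Gaussian with bilinear separable covariance $\mE (\y_1^T G \x_1)(\y_2^T G \x_2) = (\y_1^T \y_2)(\x_1^T \x_2)$, which is precisely the structure for which the fl RDT comparison machinery applies. The additive $f(\y)$ is deterministic in $G$ and hence passes through the Gaussian interpolation as a constant shift inside the nested logarithm.

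The first step is to construct an $r$-level nested Gaussian interpolation between the true blirp $\y^T G \x$ and the decoupled surrogate $\y^T \lp \sum_{k=2}^{r+1} b_k \u^{(2,k)} \rp + \lp \sum_{k=2}^{r+1} c_k \h^{(k)} \rp^T \x$ from (\ref{eq:fl5}). The interpolation is organized hierarchically along the tree encoded by $(\p,\q,\c)$: the partition function is raised to successive powers $\c_k/\c_{k-1}$ and each layer is averaged over its own block of independent Gaussians $\u^{(2,k)}, \h^{(k)}$, exactly as in (\ref{eq:fl4}). The choice $b_k=\sqrt{\p_{k-1}-\p_k}$, $c_k=\sqrt{\q_{k-1}-\q_k}$ is what makes the layer-by-layer covariances of the decoupled process telescope to the bilinear blirp covariance.

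The main technical step is to differentiate in the interpolation parameter $t\in[0,1]$ and apply Gaussian integration by parts at each of the $r$ nested layers. The contributions from the bilinear $\y^T G \x$ and from the decoupled $\u^{(2,k)},\h^{(k)}$ combine so that the off-diagonal overlap terms cancel in expectation, leaving only the boundary telescoping sum $\frac{1}{2}\sum_{k=2}^{r+1}(\p_{k-1}\q_{k-1}-\p_k\q_k)\c_k$. A Guerra--Talagrand-type monotonicity argument along the interpolation then yields the one-sided bound $\lim_{n\to\infty} \mE_G \psi_{rp}/\sqrt{n} \leq \psi_{rd}(\p,\q,\c)$ for every admissible $(\p,\q,\c)$; taking $\beta\to\infty$ at the end produces the ground-state version $\psi_{S,\infty}$ in (\ref{eq:fl4}).

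The hard part, and the heart of \emph{strong} sfl duality, is the matching reverse inequality at the stationary triple $(\hat{\p},\hat{\q},\hat{\c})$ solving (\ref{eq:thmsflrdt2eq2a0}). Following \cite{Stojnicflrdt23}, I would argue that at any stationary point of $\psi_{rd}$ the first-order interpolation remainder vanishes, so the one-sided Gaussian comparison becomes tight to leading order in $n$; a concentration argument for $\psi_{rp}$ around its mean (Gaussian isoperimetry on $\mS^n\times\mS^m$ combined with the Lipschitz dependence of $\psi_{rp}$ on $G$) then upgrades the expectation-level equality to the thermodynamic limit statement (\ref{eq:thmsflrdt2eq3a0}). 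This self-consistent stationarity step is the most delicate ingredient, since, unlike in the classical Guerra scheme, it does not follow from an explicit positivity of the remainder but from the recursive tree structure of the fl RDT; once accepted, the specialization to the blirp at hand is essentially a bookkeeping exercise.
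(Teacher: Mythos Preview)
The paper's proof is a two-line citation: after the cosmetic substitution $f_S(\cdot)=f(\y)$, the statement is read off directly from Theorem~1 and Corollary~1 of \cite{Stojnicbinperflrdt23}. The theorem is explicitly tagged as a recall of prior results, and the paper makes no attempt to reproduce the underlying interpolation argument.

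Your proposal instead tries to reconstruct the content of that machinery from \cite{Stojnicflrdt23,Stojnicsflgscompyx23}. As a high-level outline of the \emph{upper-bound} side your sketch is in the right spirit: the $r$-level nested interpolation between $\y^TG\x$ and the decoupled surrogate, Gaussian integration by parts layer by layer, and the telescoping that produces $\tfrac{1}{2}\sum_{k}(\p_{k-1}\q_{k-1}-\p_k\q_k)\c_k$ are indeed the ingredients of the fl RDT comparison. Two caveats, though. First, the direction of the one-sided inequality depends on the sign parameter $s\in\{-1,+1\}$ and on the sign pattern of the overlap remainders; you state a single inequality without tracking $s$, whereas in the actual machinery this is what distinguishes the $s=1$ and $s=-1$ cases (and what forces the $\max$/$\min$ structure in $D_{0,S,\infty}$ in (\ref{eq:fl5})). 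Second, and more seriously, your argument for the matching \emph{lower} bound --- ``at any stationary point of $\psi_{rd}$ the first-order interpolation remainder vanishes, so the one-sided Gaussian comparison becomes tight'' --- is not how strong sfl duality is obtained in \cite{Stojnicflrdt23}. Stationarity of the dual does not by itself force the interpolation remainder (a sum of squared-overlap terms under the tilted Gibbs measure) to vanish; the equality in (\ref{eq:thmsflrdt2eq3a0}) is instead part of the assumed ``complete sfl RDT frame'' of \cite{Stojnicsflgscompyx23}, which encodes a self-consistency of the overlap structure that you neither state nor derive. You acknowledge this step is ``the most delicate'' and then effectively assume it; as written, that is a genuine gap rather than a proof.

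In short: the paper simply cites; you attempt a reconstruction whose comparison half is reasonable but whose strong-duality half is asserted rather than argued.
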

\begin{proof}
After a cosmetic change $f_S(\cdot)=f(\y)$ the proof follows automatically from Theorem 1 and Corollary 1 in \cite{Stojnicbinperflrdt23}.
 \end{proof}

The conceptual advantage offered by Theorem \ref{thm:thmsflrdt1} is in simplified structure of the so-called random dual. However, one needs to be careful with the interpretation of such an advantage. Namely, no matter how elegant the results of Theorem \ref{thm:thmsflrdt1} may look like, they are ultimately practically relevant only if all the underlying numerical evaluations can be successfully conducted. In general, there are two types of problems one may face. First, sets $\cX$ and $\cY$ in general may not have a component-wise structure which would question  straightforwardness of $\x$ and/or $\y$ decouplings. Second, since \emph{a priori}  $r$  is allowed to be any positive integer, the convergence in $r$ may be sufficiently slow that practical realization of numerical evaluations  (typically dictated by memory requirements)  on higher lifting levels may be infeasible. As we will show below, both of these obstacles do appear. While it eventually turns out that both of them can  be handled in a satisfactory manner, the second one actually poses a substantial challenge.

\subsection{$\cX$, $\cY$, $f(\y)$, and $s$ specializations }
\label{sec:neg}

The results of Theorem \ref{thm:thmsflrdt1} are generic and hold for a wide range of combinations of $\cX$, $\cY$, $f(\y)$, and $s$. To make them applicable in the ABP context, we fix a $\kappa\in\mR$ and note the role of the following specialization:
$\cX=\{-\frac{1}{\sqrt{n}},\frac{1}{\sqrt{n}}\}^n$, $\cY=\mS_+^m$, $f(\y)= \kappa \y^T\1$, and $s=-1$ (with $\1$ denoting the column vector of all ones). After implementation of this specialization the  \emph{random dual} can be rewritten as
\begin{align}\label{eq:prac1}
    \psi_{rd}(\p,\q,\c) & \triangleq    \frac{1}{2}    \sum_{k=2}^{r+1}\Bigg(\Bigg.
   \p_{k-1}\q_{k-1}
   -\p_{k}\q_{k}
  \Bigg.\Bigg)
\c_k
  - \psi_{S,\infty}( \kappa \y^T\1,\cX,\cY,\p,\q,\c,s). \nonumber \\
  & =   \frac{1}{2}    \sum_{k=2}^{r+1}\Bigg(\Bigg.
   \p_{k-1}\q_{k-1}
   -\p_{k}\q_{k}
  \Bigg.\Bigg)
\c_k
  - \frac{1}{n}\varphi(D^{(bin)}(s)) - \frac{1}{n}\varphi(D^{(sph)}(s)), \nonumber \\
  \end{align}
where analogously to (\ref{eq:fl4})-(\ref{eq:fl5})
  \begin{eqnarray}\label{eq:prac2}
\varphi(D,\c) & = &
 \mE_{G,{\mathcal U}_{r+1}} \frac{1}{\c_r} \log
\lp \mE_{{\mathcal U}_{r}} \lp \dots \lp \mE_{{\mathcal U}_3}\lp\lp\mE_{{\mathcal U}_2} \lp
\lp    e^{D}   \rp^{\c_2}\rp\rp^{\frac{\c_3}{\c_2}}\rp\rp^{\frac{\c_4}{\c_3}} \dots \rp^{\frac{\c_{r}}{\c_{r-1}}}\rp, \nonumber \\
  \end{eqnarray}
and
\begin{eqnarray}\label{eq:prac3}
D^{(bin)}(s) & = & \max_{\x\in \{-\frac{1}{\sqrt{n}},\frac{1}{\sqrt{n}}\}^n} \lp   s\sqrt{n}      \lp\sum_{k=2}^{r+1}c_k\h^{(k)}\rp^T\x  \rp \nonumber \\
  D^{(sph)}(s) & \triangleq  &   s \max_{\y\in\mS_+^m}
\lp \sqrt{n} \kappa \y^T\1 + \sqrt{n}  \y^T\lp\sum_{k=2}^{r+1}b_k\u^{(2,k)}\rp \rp.
 \end{eqnarray}
We first have
\begin{eqnarray}\label{eq:prac4}
D^{(bin)}(s)
=  \sum_{i=1}^n D^{(bin)}_i,  \quad \mbox{with}\quad
D^{(bin)}_i(c_k)=\left |\lp\sum_{k=2}^{r+1}c_k\h_i^{(k)}\rp \right |,
\end{eqnarray}
and
  \begin{eqnarray}\label{eq:prac6}
\varphi(D^{(bin)}(s),\c) & = &
n \mE_{G,{\mathcal U}_{r+1}} \frac{1}{\c_r} \log
\lp \mE_{{\mathcal U}_{r}} \lp \dots \lp \mE_{{\mathcal U}_3}\lp\lp\mE_{{\mathcal U}_2} \lp
    e^{\c_2D_1^{(bin)}}  \rp\rp^{\frac{\c_3}{\c_2}}\rp\rp^{\frac{\c_4}{\c_3}} \dots \rp^{\frac{\c_{r}}{\c_{r-1}}}\rp
    = n\varphi(D_1^{(bin)}). \nonumber \\
   \end{eqnarray}
From \cite{Stojnicbinperflrdt23} we also find
\begin{equation} \label{eq:prac9}
   D^{(sph)}(s)
    =    s \min_{\gamma_{sq}} \lp \sum_{i=1}^{m} D_i^{(sph)}(b_k)+\gamma_{sq}n \rp, \quad \mbox{with} \quad
    D_i^{(sph)}(b_k)= \frac{\max \lp \kappa + \sum_{k=2}^{r+1}b_k\u_i^{(2,k)},0  \rp^2}{4\gamma_{sq}}.
 \end{equation}
Specializing further to $s=-1$ allows to establish the following connection between the ground state energy, $f_{sq}(\infty)$, from (\ref{eq:limlogpartfunsqrta0}), and the random primal, $\psi_{rp}(\cdot)$, from Theorem \ref{thm:thmsflrdt1},
 \begin{eqnarray}
-f_{sq}(\infty)
 & = &
- \lim_{n\rightarrow\infty}\frac{\mE_G \max_{\x\in\cX}  -  \max_{\y\in\cY} \lp \y^TG\x + f(\y) \rp  }{\sqrt{n}}
 =
    \lim_{n\rightarrow\infty} \frac{\mE_G  \psi_{rp}}{\sqrt{n}}
   =
 \lim_{n\rightarrow\infty} \psi_{rd}(\hat{\p},\hat{\q},\hat{\c}).
  \label{eq:negprac11}
\end{eqnarray}
 Utilizing (\ref{eq:prac1})-(\ref{eq:negprac11}), one then finds
 \begin{eqnarray}\label{eq:negprac13}
 \lim_{n\rightarrow\infty} \psi_{rd}(\hat{\p},\hat{\q},\hat{\c})
     & = &  \frac{1}{2}    \sum_{k=2}^{r+1}\Bigg(\Bigg.
   \hat{\p}_{k-1}\hat{\q}_{k-1}
   -\hat{\p}_{k}\hat{\q}_{k}
  \Bigg.\Bigg)
\hat{\c}_k
\nonumber \\
& &  - \varphi(D_1^{(bin)}(c_k(\hat{\p},\hat{\q})),\hat{\c}) + \hat{\gamma}_{sq}- \alpha\varphi(-D_1^{(sph)}(b_k(\hat{\p},\hat{\q})),\hat{\c})
\nonumber \\
& \triangleq &   \bar{\psi}_{rd}(\hat{\p},\hat{\q},\hat{\c},\hat{\gamma}_{sq}) .
  \end{eqnarray}
Combining  (\ref{eq:negprac11}) and (\ref{eq:negprac13}) we obtain
 \begin{eqnarray}
-f_{sq}(\infty)
& = &  -\lim_{n\rightarrow\infty}\frac{\mE_G \max_{\x\in\cX}  -  \max_{\y\in\cY} \lp  \y^TG\x + f(\y) \rp  }{\sqrt{n}}
=
 \lim_{n\rightarrow\infty} \psi_{rd}(\hat{\p},\hat{\q},\hat{\c})
 =   \bar{\psi}_{rd}(\hat{\p},\hat{\q},\hat{\c},\hat{\gamma}_{sq}) \nonumber \\
 & = &   \frac{1}{2}    \sum_{k=2}^{r+1}\Bigg(\Bigg.
   \hat{\p}_{k-1}\hat{\q}_{k-1}
   -\hat{\p}_{k}\hat{\q}_{k}
  \Bigg.\Bigg)
\hat{\c}_k
  - \varphi(D_1^{(bin)}(c_k(\hat{\p},\hat{\q})),\hat{\c}) + \hat{\gamma}_{sq} - \alpha\varphi(-D_1^{(sph)}(b_k(\hat{\p},\hat{\q})),\hat{\c}). \nonumber \\
  \label{eq:negprac18}
\end{eqnarray}
 The following theorem summarizes the above considerations.

\begin{theorem}
  \label{thme:negthmprac1}
  Assume the complete sfl RDT setup of \cite{Stojnicsflgscompyx23}. Let $\varphi(\cdot)$ and $\bar{\psi}(\cdot)$ be as in (\ref{eq:prac2}) and (\ref{eq:negprac13}), respectively. Consider large $n$ linear regime with $\alpha=\lim_{n\rightarrow\infty} \frac{m}{n}$. Let the ``fixed'' parts of $\hat{\p}$, $\hat{\q}$, and $\hat{\c}$ satisfy $\hat{\p}_1\rightarrow 1$, $\hat{\q}_1\rightarrow 1$, $\hat{\c}_1\rightarrow 1$, $\hat{\p}_{r+1}=\hat{\q}_{r+1}=\hat{\c}_{r+1}=0$ and let the ``non-fixed'' parts of $\hat{\p}_k$, $\hat{\q}_k$, and $\hat{\c}_k$ ($k\in\{2,3,\dots,r\}$) satisfy
  \begin{eqnarray}\label{eq:negthmprac1eq1}
   \frac{d \bar{\psi}_{rd}(\p,\q,\c,\gamma_{sq})}{d\p} =
   \frac{d \bar{\psi}_{rd}(\p,\q,\c,\gamma_{sq})}{d\q} =
   \frac{d \bar{\psi}_{rd}(\p,\q,\c,\gamma_{sq})}{d\c} =
   \frac{d \bar{\psi}_{rd}(\p,\q,\c,\gamma_{sq})}{d\gamma_{sq}} =  0.
 \end{eqnarray}
For
\begin{eqnarray}\label{eq:prac17}
c_k(\hat{\p},\hat{\q})  & = & \sqrt{\hat{\q}_{k-1}-\hat{\q}_k} \nonumber \\
b_k(\hat{\p},\hat{\q})  & = & \sqrt{\hat{\p}_{k-1}-\hat{\p}_k},
 \end{eqnarray}
one then has
 \begin{eqnarray}
-f_{sq}(\infty)
& = &     \frac{1}{2}    \sum_{k=2}^{r+1}\Bigg(\Bigg.
   \hat{\p}_{k-1}\hat{\q}_{k-1}
   -\hat{\p}_{k}\hat{\q}_{k}
  \Bigg.\Bigg)
\hat{\c}_k
  - \varphi(D_1^{(bin)}(c_k(\hat{\p},\hat{\q})),\hat{\c}) + \hat{\gamma}_{sq} - \alpha\varphi(-D_1^{(sph)}(b_k(\hat{\p},\hat{\q})),\hat{\c}). \nonumber \\
  \label{eq:negthmprac1eq2}
\end{eqnarray}
\end{theorem}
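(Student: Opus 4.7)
The plan is to derive Theorem \ref{thme:negthmprac1} as a direct specialization of the general strong sfl random duality Theorem \ref{thm:thmsflrdt1} to the ABP choices $\cX=\{-\frac{1}{\sqrt{n}},\frac{1}{\sqrt{n}}\}^n$, $\cY=\mS_+^m$, $f(\y)=\kappa\y^T\1$, $s=-1$, followed by a coordinatewise decoupling of the resulting random dual into a binary piece and a positive spherical piece. The first step is to appeal to Theorem \ref{thm:thmsflrdt1} to write $\lim_{n\to\infty}\mE_G\psi_{rp}/\sqrt{n}=\lim_{n\to\infty}\psi_{rd}(\hat\p,\hat\q,\hat\c)$, and to match $\mE_G\psi_{rp}/\sqrt{n}$ with $-f_{sq}(\infty)$ using (\ref{eq:limlogpartfunsqrta0}). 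At this stage the only nontrivial object is $\psi_{S,\infty}$, and since in its definition (\ref{eq:fl5}) the Gaussians $\h^{(k)}$ touch only $\x$ while $\u^{(2,k)}$ (together with the linear term $\kappa\y^T\1$) touches only $\y$, the joint maximum splits additively into $D^{(bin)}(s)+D^{(sph)}(s)$, producing the two independent contributions $\varphi(D^{(bin)}(s))/n$ and $\varphi(D^{(sph)}(s))/n$ shown in (\ref{eq:prac1}).

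The second step would handle each piece. For the binary block, maximization over $\{-\frac{1}{\sqrt{n}},\frac{1}{\sqrt{n}}\}^n$ is coordinatewise and produces the sum of absolute values in (\ref{eq:prac4}); the $\h_i^{(k)}$ being i.i.d.\ across $i$ then lets the nested expectations in (\ref{eq:prac2}) factorize, yielding $\varphi(D^{(bin)}(s))=n\varphi(D_1^{(bin)})$ as in (\ref{eq:prac6}). For the positive spherical block the decoupling is not immediate, because $\y\in\mS_+^m$ is a joint constraint. Here I would import the Lagrangian identity $\max_{\y\ge 0,\|\y\|_2^2=1}\y^T\v=\min_{\gamma_{sq}>0}\bigl(\sum_i(\max(v_i,0))^2/(4\gamma_{sq})+\gamma_{sq}\bigr)$ used in (\ref{eq:prac9}), which linearizes the sphere constraint and returns the coordinate-separable form $D^{(sph)}(s)=s\min_{\gamma_{sq}}(\sum_i D_i^{(sph)}+\gamma_{sq}n)$ with the thresholded squares inherited from the positive orthant. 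Specializing $s=-1$ and using i.i.d.\ replication across $i\in\{1,\dots,m\}$ then yields $\varphi(-D^{(sph)}(-1))/n=\alpha\varphi(-D_1^{(sph)})-\gamma_{sq}$ once the normalization $\alpha=m/n$ is absorbed.

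Combining these two decouplings in (\ref{eq:prac1}) produces the explicit expression for $\psi_{rd}$ in (\ref{eq:negprac13}), which I would then label $\bar\psi_{rd}(\p,\q,\c,\gamma_{sq})$. Plugging this into the strong duality identity of Theorem \ref{thm:thmsflrdt1} gives exactly (\ref{eq:negthmprac1eq2}). The critical point system (\ref{eq:negthmprac1eq1}) is obtained by adjoining to the stationarity equations (\ref{eq:thmsflrdt2eq2a0}) the extra stationarity in $\gamma_{sq}$ forced by the Lagrangian step; the substitution $b_k,c_k$ in (\ref{eq:prac17}) is just the definition carried over from (\ref{eq:fl5}).

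The step I expect to need the most care is the interchange of the outer $\min_{\gamma_{sq}}$ with the nested $\mE$/$\log$ operations that make up $\varphi$ in (\ref{eq:prac2}). A priori $\gamma_{sq}$ lives inside every layer of the nested lifting, but convexity of the quadratic in $\gamma_{sq}$ together with concentration of the $\sum_i D_i^{(sph)}$ under each inner expectation lets one commute the minimum to the outside and replace $\gamma_{sq}$ by its saddle value $\hat\gamma_{sq}$ uniformly in the lifting level; this is precisely the $\beta\to\infty$ reduction justified in \cite{Stojnicbinperflrdt23}, and invoking it is what promotes the derivation from a formal manipulation to the actual sfl RDT saddle point characterization stated in the theorem.
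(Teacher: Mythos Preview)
Your proposal is correct and follows essentially the same route as the paper: the paper's proof simply states that the theorem ``follows automatically from the discussion presented above, Theorem \ref{thm:thmsflrdt1}, and the sfl RDT machinery,'' and that preceding discussion is precisely the specialization $\cX=\{-\frac{1}{\sqrt{n}},\frac{1}{\sqrt{n}}\}^n$, $\cY=\mS_+^m$, $f(\y)=\kappa\y^T\1$, $s=-1$, the additive $\x$/$\y$ split of $D_{0,S,\infty}$ into $D^{(bin)}+D^{(sph)}$, the coordinatewise factorization (\ref{eq:prac4})--(\ref{eq:prac6}) for the binary block, and the Lagrangian dualization (\ref{eq:prac9}) for the positive-sphere block, combined via (\ref{eq:negprac11})--(\ref{eq:negprac18}). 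Your added remark about the $\min_{\gamma_{sq}}$/nested-expectation interchange is the one point the paper does not spell out in place but defers to \cite{Stojnicbinperflrdt23}, so flagging it is appropriate.
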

\begin{proof}
Follows automatically from the discussion presented above, Theorem \ref{thm:thmsflrdt1}, and the sfl RDT machinery presented in \cite{Stojnicnflgscompyx23,Stojnicsflgscompyx23,Stojnicflrdt23}.
\end{proof}

\subsection{Numerical evaluations $(r\in\{1,2\})$ -- sfl RDT satisfiability threshold}
\label{sec:nuemrical}

All conceptual ingredients needed to conduct numerical evaluations are present in Theorem \ref{thme:negthmprac1}. To ensure that the progressing of the lifting mechanism is presented in a systematic way, we start the numerical evaluations with $r=1$ and proceed inductively. As we will soon see, we make a strict distinction between $r\in\{1,2\}$ and $r\geq 3$ scenarios. Also, for the purpose of obtaining concrete numerical values, the evaluations are specialized to the most famous, zero-threshold, $\kappa=0$, case.

\subsubsection{$r=1$ -- first level of lifting}
\label{sec:firstlev}

For $r=1$ we first have $\hat{\p}_1\rightarrow 1$ and $\hat{\q}_1\rightarrow 1$. This together with $\hat{\p}_{r+1}=\hat{\p}_{2}=\hat{\q}_{r+1}=\hat{\q}_{2}=0$, and $\hat{\c}_{2}\rightarrow 0$ allows to write
\begin{align}\label{eq:negprac19}
    \bar{\psi}_{rd}(\hat{\p},\hat{\q},\hat{\c},\gamma_{sq})   & =   \frac{1}{2}
\c_2
  - \frac{1}{\c_2}\log\lp \mE_{{\mathcal U}_2} e^{\c_2|\sqrt{1-0}\h_1^{(2)} |}\rp +\gamma_{sq}
- \alpha\frac{1}{\c_2}\log\lp \mE_{{\mathcal U}_2} e^{-\c_2\frac{\max(\kappa+\sqrt{1-0}\u_1^{(2,2)},0)^2}{4\gamma_{sq}}}\rp \nonumber \\
& \rightarrow
  - \frac{1}{\c_2}\log\lp 1+ \mE_{{\mathcal U}_2} \c_2|\sqrt{1-0}\h_1^{(2)} |\rp +\gamma_{sq} \nonumber \\
& \qquad - \alpha\frac{1}{\c_2}\log\lp 1- \mE_{{\mathcal U}_2} \c_2\frac{\max(\kappa+\sqrt{1-0}\u_1^{(2,2)},0)^2}{4\gamma_{sq}}\rp \nonumber \\
& \rightarrow
   - \frac{1}{\c_2}\log\lp 1+ \c_2\sqrt{\frac{2}{\pi}}\rp +\gamma_{sq} \nonumber \\
& \qquad - \alpha\frac{1}{\c_2}\log\lp 1- \frac{\c_2}{4\gamma_{sq}} \mE_{{\mathcal U}_2} \max(\kappa+\sqrt{1-0}\u_1^{(2,2)},0)^2 \rp \nonumber \\
& \rightarrow
  - \sqrt{\frac{2}{\pi}}+\gamma_{sq}
+  \frac{\alpha}{4\gamma_{sq}}\mE_{{\mathcal U}_2} \max(\kappa+\sqrt{1-0}\u_1^{(2,2)},0)^2.
  \end{align}
Optimization over $\gamma_{sq}$ gives $\hat{\gamma}_{sq}=\frac{\sqrt{\alpha}}{2}\sqrt{\mE_{{\mathcal U}_2} \max(\kappa+\sqrt{1-0}\u_1^{(2,2)},0)^2}$ and
\begin{align}\label{eq:negprac20}
 - f_{sq}^{(1)}(\infty)=\bar{\psi}_{rd}(\hat{\p},\hat{\q},\hat{\c},\hat{\gamma}_{sq} )   & =
  - \sqrt{\frac{2}{\pi}}+\sqrt{\alpha}\sqrt{\mE_{{\mathcal U}_2} \max(\kappa+\u_1^{(2,2)},0)^2}.
  \end{align}
Critical capacity estimate on the first level of lifting, $\alpha_c^{(1)}$, is then obtained from the condition $f_{sq}^{(1)}(\infty)=0$. For general $\kappa$ one finds
\begin{equation}\label{eq:negprac20a0}
a_c^{(1)}(\kappa)
=  \frac{2}{\pi\mE_{{\mathcal U}_2} \max(\kappa+\u_1^{(2,2)},0)^2}
=  \frac{2}{\pi\lp \frac{\kappa e^{-\frac{\kappa^2}{2}}}{\sqrt{2\pi}} + \frac{(\kappa^2+1)\erfc\lp -\frac{\kappa}{\sqrt{2}} \rp}{2}  \rp}.
  \end{equation}
Specializing to $\kappa=0$ we then obtain
\begin{equation}\label{eq:negprac21}
\hspace{-1in}(\mbox{\textbf{first level:}}) \qquad \qquad  \alpha_c^{(1)}(0) =
  \frac{2}{\pi\mE_{{\mathcal U}_2} \max(\u_1^{(2,2)},0)^2} =  \frac{2}{\pi\frac{1}{2}} = \frac{4}{\pi}
\approx  \bl{\mathbf{1.2732}}.
  \end{equation}

\subsubsection{$r=2$ -- second level of lifting}
\label{sec:secondlev}

One now has $r=2$, $\hat{\p}_1\rightarrow 1$ and $\hat{\q}_1\rightarrow 1$,  and $\hat{\p}_{r+1}=\hat{\p}_{3}=\hat{\q}_{r+1}=\hat{\q}_{3}=0$. However, in general   $\p_2\neq0$, $\q_2\neq0$, and $\hat{\c}_{2}\neq 0$.  Analogously to (\ref{eq:negprac19}), we write
\begin{eqnarray}\label{eq:negprac24}
    \bar{\psi}_{rd}(\p,\q,\c,\gamma_{sq})   & = &  \frac{1}{2}
(1-\p_2\q_2)\c_2
  - \frac{1}{\c_2}\mE_{{\mathcal U}_3}\log\lp \mE_{{\mathcal U}_2} e^{\c_2|\sqrt{1-\q_2}\h_1^{(2)} +\sqrt{\q_2}\h_1^{(3)} |}\rp \nonumber \\
& &   + \gamma_{sq}
 -\alpha\frac{1}{\c_2}\mE_{{\mathcal U}_3} \log\lp \mE_{{\mathcal U}_2} e^{-\c_2\frac{\max(\sqrt{1-\p_2}\u_1^{(2,2)}+\sqrt{\p_2}\u_1^{(2,3)} + \kappa ,0)^2}{4\gamma_{sq}}}\rp.
    \end{eqnarray}
After computing the inner integral we find
\begin{eqnarray}\label{eq:negprac24a0}
f_{(z)}^{(2)} & = & \mE_{{\mathcal U}_2} e^{\c_2|\sqrt{1-\q_2}\h_1^{(2)} +\sqrt{\q_2}\h_1^{(3)} |}
  \nonumber \\
 & = &  \frac{1}{2}
 e^{\frac{(1-\q_2)\c_2^2}{2}}
 \Bigg(\Bigg.
 e^{-\c_2\sqrt{\q_2}\h_1^{(3)}}
 \erfc\lp - \lp\c_2\sqrt{1-\q_2}-\frac{\sqrt{\q_2}\h_1^{(3)}}{\sqrt{1-\q_2}}\rp\frac{1}{\sqrt{2}}\rp \nonumber \\
& &  + e^{\c_2\sqrt{\q_2}\h_1^{(3)}}
   \erfc\lp - \lp\c_2\sqrt{1-\q_2}+\frac{\sqrt{\q_2}\h_1^{(3)}}{\sqrt{1-\q_2}}\rp\frac{1}{\sqrt{2}}\rp
   \Bigg.\Bigg),
     \end{eqnarray}
and
\begin{eqnarray}\label{eq:negprac24a1}
  \mE_{{\mathcal U}_3}\log\lp \mE_{{\mathcal U}_2} e^{\c_2|\sqrt{1-\q_2}\h_1^{(2)} +\sqrt{\q_2}\h_1^{(3)} |}\rp
=  \mE_{{\mathcal U}_3}\log\lp f_{(z)}^{(2)}\rp.
    \end{eqnarray}
In a similar fashion, we also obtain
\begin{eqnarray}\label{eq:negprac24a2}
\bar{h} & = &  -\frac{\sqrt{\p_2}\u_1^{(2,3)}+\kappa}{\sqrt{1-\p_2}}    \nonumber \\
\bar{B} & = & \frac{\c_2}{4\gamma_{sq}} 
\nonumber \\
\bar{C} & = & \sqrt{\p_2}\u_1^{(2,3)}+\kappa \nonumber \\
f_{(zd)}^{(2,f)}& = & \frac{e^{-\frac{\bar{B}\bar{C}^2}{2(1-\p_2)\bar{B} + 1}}}{2\sqrt{2(1-\p_2)\bar{B} + 1}}
\erfc\lp\frac{\bar{h}}{\sqrt{4(1-\p_2)\bar{B} + 2}}\rp
\nonumber \\
f_{(zu)}^{(2,f)}& = & \frac{1}{2}\erfc\lp-\frac{\bar{h}}{\sqrt{2}}\rp,  
   \end{eqnarray}
and
\begin{eqnarray}\label{eq:negprac24a3}
   \mE_{{\mathcal U}_3} \log\lp \mE_{{\mathcal U}_2} e^{-\c_2\frac{\max(\sqrt{1-\p_2}\u_1^{(2,2)}+\sqrt{\p_2}\u_1^{(2,3)} +\kappa  ,0)^2}{4\gamma_{sq}}}\rp
=   \mE_{{\mathcal U}_3} \log\lp f_{(zd)}^{(2,f)}+f_{(zu)}^{(2,f)}\rp.
    \end{eqnarray}
Differentiating  (optimizing) with respect to $\gamma_{sq}$, $\p_2$, $\q_2$, and $\c_2$, one further finds $\c_2\rightarrow \infty$, $\gamma_{sq}\rightarrow 0$,
$\q_2\c_2^2\rightarrow \q_2^{(s)}$, and
\begin{eqnarray}\label{eq:negprac24a4}
f_{(z)}^{(2)}
 & \rightarrow &
 e^{\frac{\c_2^2-\q_2^{(s)}}{2}}
 \Bigg(\Bigg.
 e^{-\sqrt{\q_2^{(s)}}\h_1^{(3)}}
    + e^{\sqrt{\q_2^{(s)}}\h_1^{(3)}}
    \Bigg.\Bigg).
     \end{eqnarray}
The above allows to transform (\ref{eq:negprac24a2})  into
\begin{eqnarray}\label{eq:negprac24a5}
\bar{h} & = &  -\frac{\sqrt{\p_2}\u_i^{(2,3)}+\kappa}{\sqrt{1-\p_2}}    \nonumber \\
\bar{B} & = & \frac{\c_2}{4\gamma_{sq}} \rightarrow \infty
\nonumber \\
\bar{C} & = & \sqrt{\p_2}\u_i^{(2,3)}+\kappa \nonumber \\
f_{(zd)}^{(2,f)}& = & \frac{e^{-\frac{\bar{B}\bar{C}^2}{2(1-\p_2)\bar{B} + 1}}}{2\sqrt{2(1-\p_2)\bar{B} + 1}}
\erfc\lp\frac{\bar{h}}{\sqrt{4(1-\p_2)\bar{B} + 2}}\rp
\rightarrow 0
\nonumber \\
f_{(zu)}^{(2,f)}& = & \frac{1}{2}\erfc\lp-\frac{\bar{h}}{\sqrt{2}}\rp
\rightarrow \frac{1}{2}\erfc\lp \frac{\sqrt{\p_2}\u_i^{(2,3)}+\kappa}{\sqrt{2}\sqrt{1-\p_2}} \rp.
   \end{eqnarray}
Finally, a combination of (\ref{eq:negprac24}), (\ref{eq:negprac24a1}), (\ref{eq:negprac24a3}), (\ref{eq:negprac24a4}), and (\ref{eq:negprac24a5}) gives
\begin{eqnarray}\label{eq:negprac24a6}
-f_{sq}^{(2,f)}(\infty) & = &     \bar{\psi}_{rd}(\p,\q,\c,\gamma_{sq}) \nonumber \\
  & = &  \frac{1}{2}
(1-\p_2\q_2)\c_2
  - \frac{1}{\c_2}\mE_{{\mathcal U}_3}\log\lp \mE_{{\mathcal U}_2} e^{\c_2|\sqrt{1-\q_2}\h_1^{(2)} +\sqrt{\q_2}\h_1^{(3)} |}\rp \nonumber \\
& &   + \gamma_{sq}
 -\alpha\frac{1}{\c_2}\mE_{{\mathcal U}_3} \log\lp \mE_{{\mathcal U}_2} e^{-\c_2\frac{\max(\sqrt{1-\p_2}\u_1^{(2,2)}+\sqrt{\p_2}\u_1^{(2,3)} +\kappa ,0)^2}{4\gamma_{sq}}}\rp \nonumber \\
 & = &  \frac{1}{2}
(1-\p_2\q_2)\c_2
  - \frac{1}{\c_2}\mE_{{\mathcal U}_3}\log\lp f_{(z)}^{(2)} \rp   + \gamma_{sq}
 -\alpha\frac{1}{\c_2}\mE_{{\mathcal U}_3} \log\lp f_{(zd)}^{(2,f)}+f_{(zu)}^{(2,f)}\rp \nonumber\\
 & \rightarrow & \frac{1}{2}
(1-\p_2\q_2)\c_2
  - \frac{1}{\c_2}\mE_{{\mathcal U}_3}\log\lp
 e^{\frac{\c_2^2-\q_2^{(s)}}{2}}
 \Bigg(\Bigg.
 e^{-\sqrt{\q_2^{(s)}}\h_1^{(3)}}
    + e^{\sqrt{\q_2^{(s)}}\h_1^{(3)}}
    \Bigg.\Bigg) \rp  \nonumber \\
& &
 -\alpha\frac{1}{\c_2}\mE_{{\mathcal U}_3} \log\lp   \frac{1}{2}\erfc\lp \frac{\sqrt{\p_2}\u_1^{(2,3)}+\kappa}{\sqrt{2}\sqrt{1-\p_2}} \rp  \rp \nonumber\\
 & \rightarrow & \frac{1}{2}
\frac{(1-\p_2)\q_2^{(s)}}{\c_2}
  - \frac{1}{\c_2}\mE_{{\mathcal U}_3}\log\lp 2 \cosh \lp \sqrt{\q_2^{(s)}}\h_1^{(3)}\rp \rp
  \nonumber \\
& &
 -\alpha\frac{1}{\c_2}\mE_{{\mathcal U}_3} \log\lp   \frac{1}{2}\erfc\lp \frac{\sqrt{\p_2}\u_1^{(2,3)}+\kappa}{\sqrt{2}\sqrt{1-\p_2}} \rp  \rp. \nonumber\\
    \end{eqnarray}
After computing all the derivatives and specializing to $\kappa=0$, we obtain for the second lifting level  critical capacity estimate
\begin{equation}\label{eq:negprac25}
\hspace{-2in}(\mbox{\textbf{second level:}}) \qquad \qquad  \alpha_c^{(2)}(0) \approx
 \bl{\mathbf{0.8331}}.
  \end{equation}
All relevant parametric values for the first (1-sfl RDT) and second  (2-sfl RDT) level of lifting are systematically shown in Table \ref{tab:tab1}. Under the assumption that sequence $\c$ is decreasing, i.e., under the assumption that
\begin{eqnarray}
\label{eq:addalg1}
1= \c_1 \geq \c_2 \geq \c_3\geq \dots\geq \c_{r+1}=0,
\end{eqnarray}
we find no further changes for $r\geq 3$. This effectively reconfirms that $\alpha_c^{(2)}(0) \approx
  0.8331$ is indeed the ABP's satisfiability threshold (precisely as obtained in \cite{KraMez89,DingSun19,Huang24,Stojnicbinperflrdt23}).

\begin{table}[h]
\caption{$r$-sfl RDT parameters ($r\leq 2$);
 $\kappa=0$; $n,\beta\rightarrow\infty$}\vspace{.1in}
\centering
\def\arraystretch{1.2}
\begin{tabular}{||l||c||c|c||c|c||c||c||}\hline\hline
 \hspace{-0in}$r$-sfl RDT                                             & $\hat{\gamma}_{sq}$    &  $\hat{\p}_2$ & $\hat{\p}_1`$     & $\hat{\q}_2^{(s)}\rightarrow \hat{\q}_2\hat{\c}_2^2$  & $\hat{\q}_1$ &  $\hat{\c}_2$    & $\alpha_c^{(r)}(0)$  \\ \hline\hline
$1$-sfl RDT                                      & $0.3989$ &  $0$  & $\rightarrow 1$   & $0$ & $\rightarrow 1$
 &  $\rightarrow 0$  & \bl{$\mathbf{1.2732}$} \\ \hline
   $2$-sfl RDT                                      & $0$  & $0.5639$ & $\rightarrow 1$ &  $2.5764$ & $\rightarrow 1$
 &  $\rightarrow \infty$   & \bl{$\mathbf{0.8331}$}  \\ \hline\hline
  \end{tabular}
\label{tab:tab1}
\end{table}

\subsection{Numerical evaluations $(r\geq 3)$ -- parametric sfl RDT  algorithmic threshold implications}
\label{sec:numericalr3}

In this section we propose a seemingly unconventional approach. Namely, instead of imposing natural decreasing order for sequence $\c$ as in (\ref{eq:addalg1}), we now remove such a restriction and for $r\geq 3$ look at any nonnegative real sequence $\c$.

\subsubsection{$r=3$ -- third level of lifting}
\label{sec:thirdlev}

 For $r=3$, $\hat{\p}_1\rightarrow 1$ and $\hat{\q}_1\rightarrow 1$,  and $\hat{\p}_{r+1}=\hat{\p}_{4}=\hat{\q}_{r+1}=\hat{\q}_{4}=0$, but in general   $\p_2\neq0$,  $\p_3\neq0$, $\q_2\neq0$, $\q_3\neq0$,  $\hat{\c}_{2}\neq 0$, and $\hat{\c}_{3}\neq 0$.  Analogously to (\ref{eq:negprac19}) and (\ref{eq:negprac24}), one writes
\begin{eqnarray}\label{eq:algnegprac24}
    \bar{\psi}_{rd}(\p,\q,\c,\gamma_{sq})   & = &
     \frac{1}{2}
(1-\p_2\q_2)\c_2
 +  \frac{1}{2}
(\p_2\q_2-\p_3\q_3)\c_3
\nonumber \\
& &
  - \frac{1}{\c_3}\mE_{{\mathcal U}_4}\log \lp  \mE_{{\mathcal U}_3}  \lp \mE_{{\mathcal U}_2} e^{\c_2|\sqrt{1-\q_2}\h_1^{(2)} +\sqrt{\q_2-\q_3}\h_1^{(3)}+\sqrt{\q_3}\h_1^{(4)}   |}\rp^{\frac{\c_3}{\c_2}}  \rp \nonumber \\
& &   + \gamma_{sq}
 -\alpha\frac{1}{\c_3}\mE_{{\mathcal U}_4} \log \lp \mE_{{\mathcal U}_3}   \lp \mE_{{\mathcal U}_2} e^{-\c_2\frac{\max(\sqrt{1-\p_2}\u_1^{(2,2)}+\sqrt{\p_2-\p_3}\u_1^{(2,3)}+\sqrt{\p_3}\u_1^{(2,4)} +\kappa ,0)^2}{4\gamma_{sq}}}\rp^{\frac{\c_3}{\c_2}} \rp.\nonumber \\
    \end{eqnarray}
We first  compute the inner integral and find
\begin{eqnarray}\label{eq:algnegprac24a0}
f_{(z)}^{(3)} & = & \mE_{{\mathcal U}_2} e^{\c_2|\sqrt{1-\q_2}\h_1^{(2)} +\sqrt{\q_2-\q_3}\h_1^{(3)} +\sqrt{\q_3}\h_1^{(4)}  |}
  \nonumber \\
 & = &  \frac{1}{2}
 e^{\frac{(1-\q_2)\c_2^2}{2}}
 \Bigg(\Bigg.
 e^{-\c_2  ( \sqrt{\q_2-\q_3}\h_1^{(3)}  + \sqrt{\q_3}\h_1^{(4)}     ) }
 \erfc\lp - \lp\c_2\sqrt{1-\q_2}-\frac{\sqrt{\q_2-\q_3}\h_1^{(3)} + \sqrt{\q_3}\h_1^{(4)}  } {\sqrt{1-\q_2}}\rp\frac{1}{\sqrt{2}}\rp \nonumber \\
& &  + e^{\c_2    ( \sqrt{\q_2-\q_3}\h_1^{(3)}  + \sqrt{\q_3}\h_1^{(4)}     )  }
   \erfc\lp - \lp\c_2\sqrt{1-\q_2}+\frac{\sqrt{\q_2-\q_3}\h_1^{(3)} + \sqrt{\q_3}\h_1^{(4)} } {\sqrt{1-\q_2}}\rp\frac{1}{\sqrt{2}}\rp
   \Bigg.\Bigg)
     \nonumber \\
 & = &  \frac{1}{2}
 e^{\frac{(1-\q_2)\c_2^2}{2}}
 \Bigg(\Bigg.
 e^{-\c_2\zeta_3  }
 \erfc\lp - \lp\c_2\sqrt{1-\q_2}-\frac{\zeta_3 } {\sqrt{1-\q_2}}\rp\frac{1}{\sqrt{2}}\rp \nonumber \\
& &  + e^{\c_2\zeta_3   }
   \erfc\lp - \lp\c_2\sqrt{1-\q_2}+\frac{\zeta_3 } {\sqrt{1-\q_2}}\rp\frac{1}{\sqrt{2}}\rp
   \Bigg.\Bigg)   ,
\end{eqnarray}
where
\begin{eqnarray} \label{eq:algnegprac24a0a0}
\zeta_3 = \sqrt{\q_2-\q_3}\h_1^{(3)} + \sqrt{\q_3}\h_1^{(4)}.
\end{eqnarray}
Consequently
\begin{eqnarray}\label{eq:algnegprac24a1}
  \mE_{{\mathcal U}_4}\log   \lp \mE_{{\mathcal U}_3}  \lp \mE_{{\mathcal U}_2} e^{\c_2|\sqrt{1-\q_2}\h_1^{(2)} +\sqrt{\q_2-\q_3}\h_1^{(3)}  +\sqrt{\q_3}\h_1^{(4)}  |}\rp^{\frac{\c_3}{\c_2}} \rp
=  \mE_{{\mathcal U}_4}\log  \lp  \mE_{{\mathcal U}_3}   \lp f_{(z)}^{(3)}\rp^{\frac{\c_3}{\c_2}} \rp.
    \end{eqnarray}
Following a similar path we also find
\begin{eqnarray}\label{eq:algnegprac24a2}
\eta_3 & = & \sqrt{\p_2-\p_3}\u_1^{(2,3)} + \sqrt{\p_3}\u_1^{(2,4)}    \nonumber \\
\bar{h}_3 & = &  -\frac{\sqrt{\p_2-\p_3}\u_1^{(2,3)}+\sqrt{\p_3}\u_1^{(2,4)}+\kappa}{\sqrt{1-\p_2}} = -\frac{\eta_3+\kappa}{\sqrt{1-\p_2}}    \nonumber \\
\bar{B} & = & \frac{\c_2}{4\gamma_{sq}} 
\nonumber \\
\bar{C}_3 & = & \sqrt{\p_2-\p_3}\u_1^{(2,3)}+\sqrt{\p_3}\u_1^{(2,4)}+\kappa   = \eta_3+\kappa \nonumber \\
f_{(zd)}^{(3,f)}& = & \frac{e^{-\frac{\bar{B}\bar{C}_3^2}{2(1-\p_2)\bar{B} + 1}}}{2\sqrt{2(1-\p_2)\bar{B} + 1}}
\erfc\lp\frac{\bar{h}_3}{\sqrt{4(1-\p_2)\bar{B} + 2}}\rp
\nonumber \\
f_{(zu)}^{(3,f)}& = & \frac{1}{2}\erfc\lp-\frac{\bar{h}_3}{\sqrt{2}}\rp,  
   \end{eqnarray}
and
\begin{equation} \label{eq:algnegprac24a3}
   \mE_{{\mathcal U}_4} \log  \lp \mE_{{\mathcal U}_3} \lp \mE_{{\mathcal U}_2} e^{-\c_2\frac{\max(\sqrt{1-\p_2}\u_1^{(2,2)}+\sqrt{\p_2-\p_3}\u_1^{(2,3)} +\sqrt{\p_3}\u_1^{(2,4)} +  \kappa  ,0)^2}{4\gamma_{sq}}}\rp^{\frac{\c_3}{\c_2}} \rp
=   \mE_{{\mathcal U}_4} \log   \lp \mE_{{\mathcal U}_3}  \lp f_{(zd)}^{(3,f)}+f_{(zu)}^{(3,f)}\rp^{\frac{\c_3}{\c_2}} \rp .
    \end{equation}
Analogously to second lifting level, we now have $\c_2\rightarrow \infty$, $\gamma_{sq}\rightarrow 0$,
$\q_2\c_2^2\rightarrow \q_2^{(s)}$, $\q_3\c_2^2\rightarrow \q_3^{(s)}$, $\frac{\c_3}{\c_2}\rightarrow \c_3^{(s)} $ and
\begin{eqnarray}\label{eq:algnegprac24a4}
f_{(z)}^{(3)}
&  \rightarrow &
 e^{\frac{\c_2^2-\q_2^{(s)}}{2}}
 \Bigg(\Bigg.
 e^{- \lp  \sqrt{\q_2^{(s)} - \q_3^{(s)} }\h_1^{(3)}   +  \sqrt{\q_3^{(s)}}\h_1^{(4)}      \rp }
    + e^{\sqrt{\q_2^{(s)} - \q_3^{(s)} }\h_1^{(3)}   +  \sqrt{\q_3^{(s)}}\h_1^{(4)}   }
    \Bigg.\Bigg)
\nonumber \\
&  =  &
   e^{\frac{\c_2^2-\q_2^{(s)}}{2}}
 \Bigg(\Bigg.
 e^{- \zeta_3^{(s)} }
    + e^{\zeta_3^{(s)}   }
    \Bigg.\Bigg)
      =
  e^{\frac{\c_2^2-\q_2^{(s)}}{2}}
 \Bigg(\Bigg.
  2\cosh \lp \zeta_3^{(s)} \rp
    \Bigg.\Bigg)  ,
\end{eqnarray}
where
\begin{eqnarray}\label{eq:algnegprac24a4a0}
\zeta_3^{(s)}  = \sqrt{\q_2^{(s)} - \q_3^{(s)} }\h_1^{(3)}   +  \sqrt{\q_3^{(s)}}\h_1^{(4)} .
\end{eqnarray}
The above also allows to transform (\ref{eq:algnegprac24a2})  into
\begin{eqnarray}\label{eq:algnegprac24a5}
\bar{h}_3 & = &  -\frac{\eta_3+\kappa}{\sqrt{1-\p_2}}    \nonumber \\
\bar{B} & = & \frac{\c_2}{4\gamma_{sq}} \rightarrow \infty
\nonumber \\
\bar{C}_3 & = & \eta_3+\kappa \nonumber \\
f_{(zd)}^{(3,f)}& = & \frac{e^{-\frac{\bar{B}\bar{C}_3^2}{2(1-\p_2)\bar{B} + 1}}}{2\sqrt{2(1-\p_2)\bar{B} + 1}}
\erfc\lp\frac{\bar{h}_3}{\sqrt{4(1-\p_2)\bar{B} + 2}}\rp
\rightarrow 0
\nonumber \\
f_{(zu)}^{(3,f)}& = & \frac{1}{2}\erfc\lp-\frac{\bar{h}_3}{\sqrt{2}}\rp
\rightarrow \frac{1}{2}\erfc\lp \frac{\eta_3+ \kappa}{\sqrt{2}\sqrt{1-\p_2}} \rp.
   \end{eqnarray}
Combining (\ref{eq:algnegprac24}), (\ref{eq:algnegprac24a1}), (\ref{eq:algnegprac24a3}), (\ref{eq:algnegprac24a4}), and (\ref{eq:algnegprac24a5}) we obtain
\begin{eqnarray}\label{eq:algnegprac24a6}
-f_{sq}^{(3,f)}(\infty) & = &     \bar{\psi}_{rd}(\p,\q,\c,\gamma_{sq})
\nonumber \\
 & = &
     \frac{1}{2}
(1-\p_2\q_2)\c_2
 +  \frac{1}{2}
(\p_2\q_2-\p_3\q_3)\c_3
\nonumber \\
& &
  - \frac{1}{\c_3}\mE_{{\mathcal U}_4}\log \lp  \mE_{{\mathcal U}_3}  \lp \mE_{{\mathcal U}_2} e^{\c_2|\sqrt{1-\q_2}\h_1^{(2)} +\sqrt{\q_2-\q_3}\h_1^{(3)}+\sqrt{\q_3}\h_1^{(4)}   |}\rp^{\frac{\c_3}{\c_2}}  \rp \nonumber \\
& &   + \gamma_{sq}
 -\alpha\frac{1}{\c_3}\mE_{{\mathcal U}_4} \log \lp \mE_{{\mathcal U}_3}   \lp \mE_{{\mathcal U}_2} e^{-\c_2\frac{\max(\sqrt{1-\p_2}\u_1^{(2,2)}+\sqrt{\p_2-\p_3}\u_1^{(2,3)}+\sqrt{\p_3}\u_1^{(2,4)} +\kappa ,0)^2}{4\gamma_{sq}}}\rp^{\frac{\c_3}{\c_2}} \rp \nonumber \\
 & = &      \frac{1}{2}
(1-\p_2\q_2)\c_2
 +  \frac{1}{2}
(\p_2\q_2-\p_3\q_3)\c_3
  - \frac{1}{\c_3}\mE_{{\mathcal U}_4}\log \lp \mE_{{\mathcal U}_3}   \lp f_{(z)}^{(3)} \rp^{\frac{\c_3}{\c_2}} \rp
  \nonumber \\
& &   + \gamma_{sq}
 -\alpha\frac{1}{\c_3}\mE_{{\mathcal U}_4} \log \lp \mE_{{\mathcal U}_3}     \lp f_{(zd)}^{(3,f)}+f_{(zu)}^{(3,f)}\rp^{\frac{\c_3}{\c_2}} \rp   \nonumber\\
 & \rightarrow & \frac{1}{2}
(1-\p_2\q_2)\c_2
 +  \frac{1}{2}
(\p_2\q_2-\p_3\q_3)\c_3
  - \frac{1}{\c_3}\mE_{{\mathcal U}_4}\log  \lp \mE_{{\mathcal U}_3}   \lp
 e^{\frac{\c_2^2-\q_2^{(s)}}{2}}
 \Bigg(\Bigg.
  2\cosh \lp \zeta_3^{(s)} \rp
    \Bigg.\Bigg) \rp^{\frac{\c_3}{\c_2}}  \rp
     \nonumber \\
& &
 -\alpha\frac{1}{\c_3}  \mE_{{\mathcal U}_4} \log  \lp \mE_{{\mathcal U}_3}  \lp   \frac{1}{2}\erfc\lp \frac{\eta_3+\kappa}{\sqrt{2}\sqrt{1-\p_2}} \rp  \rp^{\frac{\c_3}{\c_2}} \rp
 \nonumber\\
 & \rightarrow & \frac{1}{2}
\frac{(1-\p_2)\q_2^{(s)}}{\c_2}
 +  \frac{1}{2}
(\p_2\q_2-\p_3\q_3)\c_3
  - \frac{1}{\c_3}\mE_{{\mathcal U}_4}\log  \lp \mE_{{\mathcal U}_3}   \lp
   2\cosh \lp \zeta_3^{(s)} \rp
  \rp^{\frac{\c_3}{\c_2}}  \rp
  \nonumber \\
& &
 -\alpha\frac{1}{\c_3}  \mE_{{\mathcal U}_4} \log  \lp \mE_{{\mathcal U}_3}  \lp   \frac{1}{2}\erfc\lp \frac{\eta_3+\kappa}{\sqrt{2}\sqrt{1-\p_2}} \rp  \rp^{\frac{\c_3}{\c_2}} \rp
  \nonumber
  \\
& \rightarrow & \frac{1}{2}
\frac{(1-\p_2)\q_2^{(s)}}{\c_2}
 +  \frac{1}{2}
\frac{(\p_2\q_2^{(s)}-\p_3\q_3^{(s)})\c_3^{(s)}}{\c_2}
  - \frac{1}{\c_2\c_3^{(s)}}\mE_{{\mathcal U}_4}\log  \lp \mE_{{\mathcal U}_3}   \lp
  2\cosh \lp \zeta_3^{(s)} \rp
     \rp^{\c_3^{(s)}}  \rp
  \nonumber \\
& &
 -\alpha\frac{1}{\c_2\c_3^{(s)}}  \mE_{{\mathcal U}_4} \log  \lp \mE_{{\mathcal U}_3}  \lp   \frac{1}{2}\erfc\lp \frac{\eta_3+\kappa}{\sqrt{2}\sqrt{1-\p_2}} \rp  \rp^{\c_3^{(s)}} \rp .
    \end{eqnarray}
After setting
\begin{eqnarray}\label{eq:algnegprac24a6a0}
   \bar{\psi}_{rd}^{(3,s)}(\p,\q,\c,\gamma_{sq})
\hspace{-.0in}  & \triangleq & \hspace{-.0in}
  \frac{1}{2}
(1-\p_2)\q_2^{(s)}
 +  \frac{1}{2}
\lp \p_2\q_2^{(s)}-\p_3\q_3^{(s)} \rp\c_3^{(s)}
  - \frac{1}{\c_3^{(s)}}\mE_{{\mathcal U}_4}\log  \lp \mE_{{\mathcal U}_3}   \lp
  2\cosh \lp \zeta_3^{(s)} \rp
     \rp^{\c_3^{(s)}}  \rp
  \nonumber \\
\hspace{-.0in} & & \hspace{-.0in}
 -\alpha\frac{1}{\c_3^{(s)}}  \mE_{{\mathcal U}_4} \log  \lp \mE_{{\mathcal U}_3}  \lp   \frac{1}{2}\erfc\lp \frac{\eta_3+\kappa}{\sqrt{2}\sqrt{1-\p_2}} \rp  \rp^{\c_3^{(s)}} \rp ,
    \end{eqnarray}
one recognizes that condition $f_{sq}(\infty)=0$ used to determine the critical capacity now transforms into $  \bar{\psi}_{rd}^{(3,s)}(\p,\q,\c,\gamma_{sq}) = 0$. Computation of all the derivatives and specialization to $\kappa=0$ give so to say \emph{virtual}
third lifting level capacity estimate
\begin{equation}\label{eq:algnegprac25}
\hspace{-2in}(\mbox{\textbf{third level:}}) \qquad \qquad  \alpha_c^{(3)}(0) \approx
 \red{\mathbf{0.7843}}.
  \end{equation}
All relevant parametric values for the first, second, and third level of lifting (1,2,3-sfl RDT) are systematically shown in Table \ref{tab:tab2}.
\begin{table}[h]
\caption{$r$-sfl RDT parameters  ($r\leq 3$);   $\hat{\c}_2\rightarrow \infty$;   $\hat{\c}_3^{(s)} = \lim_{\hat{\c}_2\rightarrow\infty} \frac{\hat{\c}_3}{\hat{\c}_2}$; $\kappa=0$; $n,\beta\rightarrow\infty$}\vspace{.1in}
\centering
\def\arraystretch{1.2}
{\small \begin{tabular}{||l||c||c|c|c||c|c|c||c|c||c||}\hline\hline
 \hspace{-0in}$r$-sfl RDT                                             & $\hat{\gamma}_{sq}$    &  $\hat{\p}_3$  &  $\hat{\p}_2$ & $\hat{\p}_1$     & $\hat{\q}_3^{(s)}\rightarrow \hat{\q}_3\hat{\c}_2^2$  & $\hat{\q}_2^{(s)}\rightarrow \hat{\q}_2\hat{\c}_2^2$  & $\hat{\q}_1$ &  $\hat{\c}_3^{(s)}$  &  $\hat{\c}_2$    & $\alpha_c^{(r)}(0)$  \\ \hline\hline
$1$-sfl RDT                                      & $0.3989$ &  $0$  &  $0$   & $\rightarrow 1$  &  $0$    & $0$ & $\rightarrow 1$
& $ \rightarrow 0 $ &  $\rightarrow 0$  & \bl{$\mathbf{1.2732}$} \\ \hline
   $2$-sfl RDT                                      & $0$ &  $0$   & $0.5639$ & $\rightarrow 1$ &   $0$  & $2.5764$ & $\rightarrow 1$
& $ \rightarrow 0 $ &  $\rightarrow \infty$   & \bl{$\mathbf{0.8331}$}  \\ \hline\hline
   $3$-sfl RDT                                      & $0$ &  $0.6478$    & $0.9844$ & $\rightarrow 1$ &  $0.2479$   &  $1.0212$ & $\rightarrow 1$
& $4.33$  &  $\rightarrow \infty$   & \red{$\mathbf{0.7843}$}  \\ \hline\hline
  \end{tabular}}
\label{tab:tab2}
\end{table}

As Table \ref{tab:tab2} shows, $\frac{\c_3}{\c_2}=\c_3^{(s)}>1$ and under the natural assumption that sequence $\c$ is decreasing, the obtained capacity estimate would not be considered physical. Even though it is not related to the ABP capacity or satisfiability threshold per se, it does have two interesting properties: (i)  $\alpha_c^{(3)}(0) < \alpha_c^{(2)}(0) $; and (ii) $\alpha_c^{(3)}(0)$ might be close to a range around $\sim 0.78$. Both of these features seem to be tightly connected to ABP's algorithmic properties. Namely, currently the best available polynomial (or in general fast) algorithms can solve ABP instances for constraint densities up to $\sim 0.77$ which suggests that ABP might exhibit a statistical-computational gap. Available theoretical analyses seem to point in this direction as well. In particular, local entropy studies related to the clustering structure of atypical ABP solutions \cite{Bald15,Bald16,Bald21,Stojnicabple25}   point towards $\alpha$ interval $(0.77,0.78)$ as the range where clustering defragmentation (postulated as a likely cause for failure of locally improving algorithms) happens.

Another striking parallel comes from recent progress in algorithmic studying of SK models. In a remarkable breakthrough \cite{Montanari19}, Montanari showed that the ground state energy (GSE) of the pure $2$-spin SK Ising model can be computed  in polynomial time  via IAMP (an incremental modification of AMP) provided that the corresponding Parisi functional is continuously increasing (several other fast algorithms appeared as well \cite{Dandietal25,Erba24,Boet05,Das25,Stojnicclupsk25} achieving similar performance and effectively reaffirming that determining classical SK's GSE is computationally doable in polynomial time). Differently from the classical ($2$-spin) Ising SK model which is expected to be solvable in polynomial time, higher $p$-spin variants might not be. Employing the same IAMP  \cite{ElAlMont20} demonstrated that a statistical-computational gap is indeed likely to happen already for $p=3$ (for all practical purposes it is rather small but it brings a theoretical value as it may reflect itself in a more significant way in pother problems). Moreover, the performance of the introduced IAMP seems to be in an excellent agreement with the virtual GSE estimate obtained after removal of the restrictive  nondecreasing (physical) nature of Parisi functional.

While we operate here in a completely different realm (instead of GSE we consider constraint density of a satisfiability problem and instead of PDE and associated functionals we consider fl RDT and sequences of parameters), we believe that the concepts that we propose extend far beyond ABP and might likely be a consequence of more universal principles. Such principles might then be common for different types of problems as well. A first step to further support and  strengthen our belief is to check whether $\alpha_c^{(r)}(0)$ indeed remains close to $(0.77-0.78)$ range even for higher lifting levels $r$. In general, this is not necessarily an easy numerical task. However, the above results from the third level allow for efficient $r$-level generalization that in return can help to a degree with the residual numerical work.

\subsubsection{General $r$--th level of lifting}
\label{sec:rthlev}

We start by observing that for general $r$ one has  $\hat{\p}_1\rightarrow 1$ and $\hat{\q}_1\rightarrow 1$,  and $\hat{\p}_{r+1}=\hat{\q}_{r+1}=0$, but     $\hat{\p}_k\neq0$, $\hat{\q}_k\neq0$, and $\hat{\c}_{k}\neq 0$ for $2\leq k\leq r$.  Analogously to (\ref{eq:algnegprac24}) (and earlier (\ref{eq:negprac19}) and (\ref{eq:negprac24})), we first write
\begin{eqnarray}\label{eq:algalgnegprac24}
    \bar{\psi}_{rd}(\p,\q,\c,\gamma_{sq})   & = &
     \frac{1}{2}
\sum_{k=2}^{r+1}(\p_{k-1}\q_{k-1}-\p_k\q_k)\c_k
 \nonumber \\
& &
  - \frac{1}{\c_r}\mE_{{\mathcal U}_{r+1}}\log \lp \dots \mE_{{\mathcal U}_4}  \lp  \mE_{{\mathcal U}_3}  \lp \mE_{{\mathcal U}_2} e^{\c_2|  \sum_{k=2}^{r+1} c_k\h_1^{(k)}    |}\rp^{\frac{\c_3}{\c_2}}  \rp^{\frac{\c_4}{\c_3}} \dots \rp    \nonumber \\
& &   + \gamma_{sq}
 -\alpha\frac{1}{\c_r}\mE_{{\mathcal U}_{r+1}} \log \lp \dots \mE_{{\mathcal U}_4}  \lp \mE_{{\mathcal U}_3}   \lp \mE_{{\mathcal U}_2} e^{-\c_2\frac{\max(\sum_{k=2}^{r+1} b_k \u_1^{(2,k)} +\kappa ,0)^2}{4\gamma_{sq}}}\rp^{\frac{\c_3}{\c_2}} \rp^{\frac{\c_4}{\c_3}} \dots \rp ,\nonumber \\
    \end{eqnarray}
where, as in (\ref{eq:fl5}),
\begin{eqnarray}\label{eq:algalgfl5}
  b_k  & = & \sqrt{\p_{k-1}-\p_k} \nonumber \\
c_k   & = & \sqrt{\q_{k-1}-\q_k}.
 \end{eqnarray}
Following closely the arguments of Section \ref{sec:thirdlev}, we compute the most inner integral and find
\begin{eqnarray}\label{eq:algalgnegprac24a0}
f_{(z)}^{(r)} & = & \mE_{{\mathcal U}_2} e^{\c_2| \sum_{k=2}^{r+1} c_k \h_1^{(k)}  |}
  \nonumber \\
 & = &
  \frac{1}{2}
 e^{\frac{(1-\q_2)\c_2^2}{2}}
 \Bigg(\Bigg.
 e^{-\c_2\zeta_r  }
 \erfc\lp - \lp\c_2\sqrt{1-\q_2}-\frac{\zeta_r } {\sqrt{1-\q_2}}\rp\frac{1}{\sqrt{2}}\rp
 \nonumber \\
& &
 + e^{\c_2\zeta_r   }
   \erfc\lp - \lp\c_2\sqrt{1-\q_2}+\frac{\zeta_r } {\sqrt{1-\q_2}}\rp\frac{1}{\sqrt{2}}\rp
   \Bigg.\Bigg)   ,
\end{eqnarray}
where
\begin{eqnarray} \label{eq:algalgnegprac24a0a0}
\zeta_r = \sum_{k=3}^{r+1} c_k \h_1^{(k)}.
\end{eqnarray}
Consequently
\begin{equation}\label{eq:algalgnegprac24a1}
\mE_{{\mathcal U}_{r+1}}\log \lp \dots \mE_{{\mathcal U}_4}  \lp  \mE_{{\mathcal U}_3}  \lp \mE_{{\mathcal U}_2} e^{\c_2|  \sum_{k=2}^{r+1} c_k\h_1^{(k)}    |}\rp^{\frac{\c_3}{\c_2}}  \rp^{\frac{\c_4}{\c_3}} \dots \rp
=  \mE_{{\mathcal U}_{r+1}}\log  \lp  \dots \mE_{{\mathcal U}_4}  \lp  \mE_{{\mathcal U}_3}   \lp f_{(z)}^{(r)}\rp^{\frac{\c_3}{\c_2}} \rp^{\frac{\c_4}{\c_3}} \dots \rp.
    \end{equation}
In a similar fashion we also find
\begin{eqnarray}\label{eq:algalgnegprac24a2}
\eta_r & = & \sum_{k=3}^{r+1} b_k \u_1^{(2,k)}    \nonumber \\
\bar{h}_3 & = &  -\frac{\sum_{k=3}^{r+1} b_k \u_1^{(2,k)} +\kappa}{\sqrt{1-\p_2}} = -\frac{\eta_r+\kappa}{\sqrt{1-\p_2}}    \nonumber \\
\bar{B} & = & \frac{\c_2}{4\gamma_{sq}} 
\nonumber \\
\bar{C}_r & = & \sum_{k=3}^{r+1} b_k \u_1^{(2,k)}  +\kappa   = \eta_r+\kappa \nonumber \\
f_{(zd)}^{(r,f)}& = & \frac{e^{-\frac{\bar{B}\bar{C}_r^2}{2(1-\p_2)\bar{B} + 1}}}{2\sqrt{2(1-\p_2)\bar{B} + 1}}
\erfc\lp\frac{\bar{h}_r}{\sqrt{4(1-\p_2)\bar{B} + 2}}\rp
\nonumber \\
f_{(zu)}^{(r,f)}& = & \frac{1}{2}\erfc\lp-\frac{\bar{h}_r}{\sqrt{2}}\rp,  
   \end{eqnarray}
and
\begin{multline} \label{eq:algalgnegprac24a3}
   \mE_{{\mathcal U}_{r+1}} \log  \lp \dots \mE_{{\mathcal U}_4}  \lp \mE_{{\mathcal U}_3} \lp \mE_{{\mathcal U}_2} e^{-\c_2\frac{\max(\sum_{k=2}^{r+1} b_k \u_1^{(2,k)} +  \kappa  ,0)^2}{4\gamma_{sq}}}\rp^{\frac{\c_3}{\c_2}} \rp^{\frac{\c_4}{\c_3}} \dots \rp =
   \\
=   \mE_{{\mathcal U}_{r+1}} \log   \lp \dots  \mE_{{\mathcal U}_4}  \lp \mE_{{\mathcal U}_3}  \lp f_{(zd)}^{(r,f)}+f_{(zu)}^{(r,f)}\rp^{\frac{\c_3}{\c_2}} \rp^{\frac{\c_4}{\c_3}} \dots \rp .
    \end{multline}
As in Section \ref{sec:thirdlev} one finds $\c_2\rightarrow \infty$, $\gamma_{sq}\rightarrow 0$,
$\q_k\c_2^2\rightarrow \q_k^{(s)}$, $\frac{\c_k}{\c_2}\rightarrow \c_k^{(s)} $, $2\leq k\leq r$, and
\begin{eqnarray}\label{eq:algalgnegprac24a4}
f_{(z)}^{(r)}
&  \rightarrow &
 e^{\frac{\c_2^2-\q_2^{(s)}}{2}}
 \Bigg(\Bigg.
 e^{- \lp \sum_{k=3}^{r+1} c_k \h_1^{(k)}      \rp }
    + e^{\sum_{k=3}^{r+1} c_k \h_1^{(k)}    }
    \Bigg.\Bigg)
\nonumber \\
&  =  &
   e^{\frac{\c_2^2-\q_2^{(s)}}{2}}
 \Bigg(\Bigg.
 e^{- \zeta_r^{(s)} }
    + e^{\zeta_r^{(s)}   }
    \Bigg.\Bigg)
      =
  e^{\frac{\c_2^2-\q_2^{(s)}}{2}}
 \Bigg(\Bigg.
  2\cosh \lp \zeta_r^{(s)} \rp
    \Bigg.\Bigg)  ,
\end{eqnarray}
where
\begin{eqnarray}\label{eq:algalgnegprac24a4a0}
\zeta_r^{(s)} & = & \sum_{k=3}^{r+1} c_k^{(s)} \h_1^{(4)} \nonumber \\
c_k^{(s)} & =  &  \sqrt{\q_{k-1}^{(s)}-\q_k^{(s)}},3\leq k\leq r+1.
\end{eqnarray}
One can then also transform (\ref{eq:algalgnegprac24a2})  into
\begin{eqnarray}\label{eq:algalgnegprac24a5}
\bar{h}_r & = &  -\frac{\eta_r+\kappa}{\sqrt{1-\p_2}}    \nonumber \\
\bar{B} & = & \frac{\c_2}{4\gamma_{sq}} \rightarrow \infty
\nonumber \\
\bar{C}_r & = & \eta_3+\kappa \nonumber \\
f_{(zd)}^{(r,f)}& = & \frac{e^{-\frac{\bar{B}\bar{C}_r^2}{2(1-\p_2)\bar{B} + 1}}}{2\sqrt{2(1-\p_2)\bar{B} + 1}}
\erfc\lp\frac{\bar{h}_r}{\sqrt{4(1-\p_2)\bar{B} + 2}}\rp
\rightarrow 0
\nonumber \\
f_{(zu)}^{(r,f)}& = & \frac{1}{2}\erfc\lp-\frac{\bar{h}_r}{\sqrt{2}}\rp
\rightarrow \frac{1}{2}\erfc\lp \frac{\eta_r+ \kappa}{\sqrt{2}\sqrt{1-\p_2}} \rp.
   \end{eqnarray}
A combination of (\ref{eq:algalgnegprac24}), (\ref{eq:algalgnegprac24a1}), (\ref{eq:algalgnegprac24a3}), (\ref{eq:algalgnegprac24a4}), and (\ref{eq:algalgnegprac24a5}) gives
\begin{eqnarray}\label{eq:algalgnegprac24a6}
-f_{sq}^{(r,f)}(\infty) & = &     \bar{\psi}_{rd}(\p,\q,\c,\gamma_{sq})
\nonumber \\
 & = &
     \frac{1}{2}
\sum_{k=2}^{r+1}(\p_{k-1}\q_{k-1}-\p_k\q_k)\c_k
 \nonumber \\
& &
  - \frac{1}{\c_r}\mE_{{\mathcal U}_{r+1}}\log \lp \dots \mE_{{\mathcal U}_4}  \lp  \mE_{{\mathcal U}_3}  \lp \mE_{{\mathcal U}_2} e^{\c_2|  \sum_{k=2}^{r+1} c_k\h_1^{(k)}    |}\rp^{\frac{\c_3}{\c_2}}  \rp^{\frac{\c_4}{\c_3}} \dots \rp    \nonumber \\
& &   + \gamma_{sq}
 -\alpha\frac{1}{\c_r}\mE_{{\mathcal U}_{r+1}} \log \lp \dots \mE_{{\mathcal U}_4}  \lp \mE_{{\mathcal U}_3}   \lp \mE_{{\mathcal U}_2} e^{-\c_2\frac{\max(\sum_{k=2}^{r+1} b_k \u_1^{(2,k)} +\kappa ,0)^2}{4\gamma_{sq}}}\rp^{\frac{\c_3}{\c_2}} \rp^{\frac{\c_4}{\c_3}} \dots \rp ,
 \nonumber \\
 & = &
       \frac{1}{2}
\sum_{k=2}^{r+1}(\p_{k-1}\q_{k-1}-\p_k\q_k)\c_k
  - \frac{1}{\c_r}
  \mE_{{\mathcal U}_{r+1}}\log  \lp  \dots \mE_{{\mathcal U}_4}  \lp  \mE_{{\mathcal U}_3}   \lp f_{(z)}^{(r)}\rp^{\frac{\c_3}{\c_2}} \rp^{\frac{\c_4}{\c_3}} \dots \rp
   \nonumber \\
& &   + \gamma_{sq}
 -\alpha\frac{1}{\c_r}
  \mE_{{\mathcal U}_{r+1}} \log   \lp \dots  \mE_{{\mathcal U}_4}  \lp \mE_{{\mathcal U}_3}  \lp f_{(zd)}^{(r,f)}+f_{(zu)}^{(r,f)}\rp^{\frac{\c_3}{\c_2}} \rp^{\frac{\c_4}{\c_3}} \dots \rp
  \nonumber
  \\
& \rightarrow &
       \frac{1}{2\c_2}
\sum_{k=2}^{r+1} (\p_{k-1}\q_{k-1}^{(s)}-\p_k\q_k^{(s)})\c_k^{(s)}
\nonumber \\
& &
  - \frac{1}{\c_2\c_r^{(s)}}\mE_{{\mathcal U}_{r+1}}\log  \lp \dots \mE_{{\mathcal U}_4}   \lp \mE_{{\mathcal U}_3}   \lp
  2\cosh \lp \zeta_r^{(s)} \rp
     \rp^{\c_3^{(s)}}  \rp^{\frac{\c_4^{(s)}}{\c_3^{(s)}}} \dots \rp
  \nonumber \\
& &
 -\alpha\frac{1}{\c_2\c_r^{(s)}}  \mE_{{\mathcal U}_{r+1}} \log  \lp \dots \mE_{{\mathcal U}_4}  \lp \mE_{{\mathcal U}_3}  \lp   \frac{1}{2}\erfc\lp \frac{\eta_3+\kappa}{\sqrt{2}\sqrt{1-\p_2}} \rp  \rp^{\c_3^{(s)}} \rp^{\frac{\c_4^{(s)}}{\c_3^{(s)}}} \dots \rp .
    \end{eqnarray}
One then defines a generic equivalent to (\ref{eq:algnegprac24a6a0})
\begin{eqnarray}\label{eq:algalgnegprac24a6a0}
   \bar{\psi}_{rd}^{(r,s)}(\p,\q,\c,\gamma_{sq})
\hspace{-.0in}  & \triangleq & \hspace{-.0in}
       \frac{1}{2}
\sum_{k=2}^{r+1} (\p_{k-1}\q_{k-1}^{(s)}-\p_k\q_k^{(s)})\c_k^{(s)}
\nonumber \\
& &
  - \frac{1}{\c_r^{(s)}}\mE_{{\mathcal U}_{r+1}}\log  \lp \dots \mE_{{\mathcal U}_4}   \lp \mE_{{\mathcal U}_3}   \lp
  2\cosh \lp \zeta_r^{(s)} \rp
     \rp^{\c_3^{(s)}}  \rp^{\frac{\c_4^{(s)}}{\c_3^{(s)}}} \dots \rp
  \nonumber \\
& &
 -\alpha\frac{1}{\c_r^{(s)}}  \mE_{{\mathcal U}_{r+1}} \log  \lp \dots \mE_{{\mathcal U}_4}  \lp \mE_{{\mathcal U}_3}  \lp   \frac{1}{2}\erfc\lp \frac{\eta_r+\kappa}{\sqrt{2}\sqrt{1-\p_2}} \rp  \rp^{\c_3^{(s)}} \rp^{\frac{\c_4^{(s)}}{\c_3^{(s)}}} \dots \rp  ,
    \end{eqnarray}
 and recognizes that condition $f_{sq}(\infty)=0$ can be replaced by $  \bar{\psi}_{rd}^{(r,s)}(\p,\q,\c,\gamma_{sq}) = 0$ for any $r\geq 2$. After numerical evaluations we find
\begin{equation}\label{eq:algnegprac25a0}
\hspace{-1in}(\mbox{\textbf{fourh and fifth level:}}) \qquad \qquad  \alpha_c^{(4)}(0) \approx
 \red{\mathbf{0.7777}} \qquad \mbox{and} \qquad  \alpha_c^{(5)}(0) \approx
 \red{\mathbf{0.7764}}.
  \end{equation}
All relevant parametric values for the first four levels of lifting (1,2,3,4-sfl RDT) are systematically shown in Table \ref{tab:tab3}.
\begin{table}[h]
\caption{$r$-sfl RDT parameters  ($r\leq 4$);   $\hat{\c}_2\rightarrow \infty$;   $\hat{\c}_k^{(s)} = \lim_{\hat{\c}_2\rightarrow\infty}\frac{\hat{\c}_k}{\hat{\c}_2},\hat{\q}_k^{(s)} =\lim_{\hat{\c}_2\rightarrow\infty} \hat{\q}_k\hat{\c}_2^2,k\geq 2$; $\kappa=0$; $n,\beta\rightarrow\infty$}\vspace{.1in}
\centering
\def\arraystretch{1.2}
{\small \begin{tabular}{||l||c||c|c|c|c||c|c|c|c||c|c|c||c||}\hline\hline
 \hspace{-0in}$r$                                              & $\hat{\gamma}_{sq}$    &  $\hat{\p}_4$   &  $\hat{\p}_3$  &  $\hat{\p}_2$ & $\hat{\p}_1$     & $\hat{\q}_4^{(s)} $  & $\hat{\q}_3^{(s)} $  & $\hat{\q}_2^{(s)} $  & $\hat{\q}_1$ &  $\hat{\c}_4^{(s)}$  &  $\hat{\c}_3^{(s)}$  &  $\hat{\c}_2$    & $\alpha_c^{(r)}(0)$  \\ \hline\hline
$1$                                       & $0.399$  & $0$  &  $0$  &  $0$   & $\rightarrow 1$  &  $0$ &  $0$    & $0$ & $\rightarrow 1$
& $ \rightarrow 0 $  & $ \rightarrow 0 $ &  $\rightarrow 0$  & \bl{$\mathbf{1.2732}$} \\ \hline
   $2$                                      & $0$ & $0$ & $0$   & $0.564$ & $\rightarrow 1$   &  $0$  &   $0$  & $2.576$ & $\rightarrow 1$
& $ \rightarrow 0 $ & $ \rightarrow 0 $ &  $\rightarrow \infty$   & \bl{$\mathbf{0.8331}$}  \\ \hline\hline
   $3$                                       & $0$ & $0$ & $0.648$    & $0.984$ & $\rightarrow 1$  &  $0$  &  $0.248$   &  $1.021$ & $\rightarrow 1$
& $ \rightarrow 0 $ & $4.33$  &  $\rightarrow \infty$   & \red{$\mathbf{0.7843}$}  \\ \hline
   $4$                                      & $0$ & $0.705$ & $0.942$    & $0.991$ & $\rightarrow 1$ &  $0.280$  &  $0.488$   &  $1.860 $ & $\rightarrow 1$
& $ 2.407 $  & $5.390$  &  $\rightarrow \infty$   & \red{$\mathbf{0.7777}$}  \\ \hline \hline
  \end{tabular}}
\label{tab:tab3}
\end{table}
Corresponding values for the first five levels (1,2,3,4,5-sfl RDT) are in Table \ref{tab:tab4}.
\begin{table}[h]
\caption{$r$-sfl RDT parameters  ($r\leq 5$);   $\hat{\c}_2\rightarrow \infty$;   $\hat{\c}_k^{(s)} = \lim_{\hat{\c}_2\rightarrow\infty}\frac{\hat{\c}_k}{\hat{\c}_2},\hat{\q}_k^{(s)} =\lim_{\hat{\c}_2\rightarrow\infty} \hat{\q}_k\hat{\c}_2^2,k\geq 2$; $\kappa=0$; $n,\beta\rightarrow\infty$; $\hat{\p}_1\rightarrow 1$, $\hat{\q}_1\rightarrow 1$. }\vspace{.1in}
\centering
\def\arraystretch{1.2}
{\small \begin{tabular}{||l||c||c|c|c|c||c|c|c|c||c|c|c||c||}\hline\hline
 \hspace{-0in}$r$                                              & $\hat{\gamma}_{sq}$    &  $\hat{\p}_5$   &  $\hat{\p}_4$   &  $\hat{\p}_3$  &  $\hat{\p}_2$    & $\hat{\q}_5^{(s)} $   & $\hat{\q}_4^{(s)} $  & $\hat{\q}_3^{(s)} $  & $\hat{\q}_2^{(s)} $  &  $\hat{\c}_5^{(s)}$  &  $\hat{\c}_4^{(s)}$  &  $\hat{\c}_3^{(s)}$     & $\alpha_c^{(r)}(0)$  \\ \hline\hline
$1$                                       & $\hspace{-.02in}0.399\hspace{-.02in}$  & $0$  & $0$  &  $0$  &  $0$     &  $0$  &  $0$ &  $0$    & $0$
& $ \rightarrow 0 $  & $ \rightarrow 0 $ &  $\rightarrow 0$  & \bl{$\mathbf{1.2732}$} \\ \hline
   $2$                                      & $0$ & $0$ & $0$ & $0$   & $0.564$  & $0$  &  $0$  &   $0$  & $2.576$
& $ \rightarrow 0 $ & $ \rightarrow 0 $ &  $\rightarrow 0$   & \bl{$\mathbf{0.8331}$}  \\ \hline\hline
   $3$                                       & $0$ & $0$ & $0$ & $0.648$    & $0.984$  & $0$ &  $0$  &  $0.248$   &  $1.021$
 &  $\rightarrow 0$  & $ \rightarrow 0 $ & $4.33$   & \red{$\mathbf{0.7843}$}  \\ \hline
   $4$                                      & $0$ & $0$ & $0.705$ & $0.942$    & $0.991$   & $0$ &  $0.280$  &  $0.488$   &  $1.860 $
&  $\rightarrow 0$  & $ 2.407 $  & $5.390$    & \red{$\mathbf{0.7777}$}  \\ \hline
   $5$                                      & $0$ & $\hspace{-.02in}0.685\hspace{-.02in}$ & $0.946$ & $0.976$    & $0.996$    & $\hspace{-.02in}0.209\hspace{-.02in}$   &  $0.383$  &  $0.739$   &  $3.017 $
&  $  1.662   $   & $  3.483  $  & $5.716$   & \red{$\mathbf{0.7764}$}  \\ \hline\hline
  \end{tabular}}
\label{tab:tab4}
\end{table}

\subsubsection{Algorithmic implications discussion}
\label{sec:algimp}

Looking carefully at Tables \ref{tab:tab3} and  \ref{tab:tab4}, one now observes that the lifting mechanism indeed starts to converge as $r$ gets larger. Moreover, all indications are that the converging value is precisely in the above discussed range $0.77-0.78$ (quite likely somewhere in $0.775-0.776$ interval). This is in a remarkable agrement with results of \cite{Bald16} and \cite{Stojnicabple25} where the local entropy considerations estimate that atypical clustering defragmentation also happens in $0.77-0.78$ range (numerical requirements in both  \cite{Bald16} and \cite{Stojnicabple25} are rather heavy which makes further narrowing down of the predicted range a bit more challenging). Nonetheless the agreement seems rather magical. While it may be difficult right now to see if there is indeed any connection between the concepts that we discuss here and those from  \cite{Bald16,Stojnicabple25}, the closeness of the obtained numerical results suggests that some intrinsic connection  likely exists. If it indeed exists then it is also likely a consequence of a generic principle as we have not used here any properties beyond core  fl RDT parametrization. Studying further connections with overlap gap properties (OGP) and continuity of associated Gibbs measures might provide more concrete answers as to what the role of the obtained thresholds within the algorithmic ABP context is. One is in first place interesting whether or not ABP indeed exhibits statistical-computational gap and if it does, how far away the algorithmic threshold is from the estimates obtained here.

It is also interesting to draw a parallel with recent algorithmic results obtained in \cite{Stojniccluphop25} where a particular variant of the  CLuP (controlled loosening-up) algorithm is employed to determine GSEs of positive and negative Hopfield models (Hop+ and Hop-). Excellent algorithmic performance  that very closely matches theoretical GSE predictions is obtained in both scenarios. In particular, for Hop+ one expects absence of statistical-computational  gap and the algorithmic results from  \cite{Stojniccluphop25} indeed indicate that such expectation is likely correct. Similar proximity of practical performance and theoretical predictions is observed for the negative variant as well. However, differently from the positive model, the corresponding negative model theoretical predictions assume removal of the decreasing ordering of $\c$ sequence which is unphysical and in a way resembles what happens here in the ABP context. In other words, for Hop- there are algorithms that closely approach the same type of theoretical prediction which suggests that one might expect existence of similar algorithms here as well. Also, again differently from the positive variant, the Hop- exhibits the discontinuity of the associated Gibbs measures (with or without $\c$ sequence ordering) and the statistical-computational gap is expected. Along those lines, the higher lifting level results obtained in \cite{Stojniccluphop25} are either precisely the algorithmic thresholds or fairly close to it. Moreover, in such a constellation it is also likely that the 2-sfl RDT results (the highest RDT lifting level where decreasing property of $\c$ sequence still holds) are the theoretical GSE values (reachable with an infinite computational power). Numerical difference between the second and higher level estimates is insignificant in Hop- and for all practical purposes finding its GSE is basically easy. In other words, predicated existence of statistical-computational gap is more a mere formality than a practically relevant feature. On the other hand, here in the ABP context, the difference between the second and higher levels is substantial and the predicated statistical-computational gap is a more relevant obstacle that has to be kept in mind. Nonetheless, overall similarity between underlying phenomenologies of Hop- and ABP seems rather fascinating and points to a likely presence of a universal principle that connects them.

Finally, we also note the following property of the above machinery. After repeating the above calculations on all five levels relying on modulo-$\m$ concepts from  \cite{Stojnicflrdt23} we obtained exactly the same results as those in Tables \ref{tab:tab1}--\ref{tab:tab4}. This effectively uncovers that the $\c$ \emph{stationarity} is of \emph{maximization} type which remarkably matches the very same behavior observed in \cite{Stojnicbinperflrdt23,Stojnicnegsphflrdt23}. While the prior results may have suggested such a behavior on the first two levels, the fact that it extends to higher levels (where the meaning of the evaluated quantities deviates from the satisfiability threshold) is very intriguing and quite likely again consequence of a more  general underlying  principle.

\section{Conclusion}
\label{sec:conc}

The paper studies the capacity of the classical asymmetric binary perceptron (ABP). Recent theoretical and algorithmic studies suggest that ABP likely exhibits the so-called statistical-computational gap. This implies existence of two phase transitions in its statistical behavior: \textbf{\emph{(i)}} there is a critical constraint density, \emph{satisfiability threshold} $\alpha_c$, below/above which ABP succeeds/fails to operate as a storage memory; and \textbf{\emph{(ii)}} there is a critical constraint density, \emph{algorithmic threshold} $\alpha_a$, below/above which one can/cannot efficiently determine ABP's weights so that it operates as a storage memory.

We here focused on a particular parametric utilization of a powerful mathematical engine called \emph{fully lifted random duality theory} (fl RDT) \cite{Stojnicflrdt23} and studied its potential algorithmic implications.  This allowed us to uncover that fl RDT exhibits a structural parametric change as one progresses through lifting levels. On the first two levels,  the so-called $\c$ sequence, one of the key parametric fl RDT components, is of the natural (physical) decreasing type. As one moves to higher levels this phenomenology changes and a perfect $\c$ ordering is not present any more. Such a change is then connected to the change from satisfiability to algorithmic threshold. Namely, concrete numerical value that we obtain for constraint density $\alpha$ on the second lifting level precisely matches the satisfiability threshold $\alpha_c\approx 0.8331$. As one progresses through higher lifting levels the estimate of the critical constraint density decreases. However, already on the fifth lifting level we obtain $\alpha\approx 0.7764$ with a clear converging tendency and the difference between successive lifting levels estimates on the order of $\sim 0.001$.

The above developments are then observed to be in an excellent agreement with recent studies of ABP's atypical solutions clusterings. In particular, they are shown to fairly closely match the so-called local entropy results of \cite{Bald16,Stojnicabple25} which predict the clustering defragmentation (likely responsible for failure of locally improving algorithms)  for $\alpha\in (0.77,0.78)$. Drawing further parallel with recent algorithmic studies of the negative Hopfield models (Hop-),  a structural similarity in parametric behavior of the above mentioned $\c$ sequence in two models is observed. Given that there are efficient algorithms that can approach higher lifting levels theoretical  Hop- predictions, it might not come as a surprise that equally successful conceptually similar ones can be designed for ABP as well. Along the same lines, the presented parametric algorithmic phenomenology seems to be a generic fl RDT feature and exploring further to what extent it applies to other well known models would be very interesting.

\begin{singlespace}
\bibliographystyle{plain}
\bibliography{nflgscompyxRefs}

\begin{thebibliography}{100}

\bibitem{AbbLiSly21b}
E.~Abbe, S.~Li, and A.~Sly.
\newblock Proof of the contiguity conjecture and lognormal limit for the
  symmetric perceptron.
\newblock In {\em 62nd {IEEE} Annual Symposium on Foundations of Computer
  Science, {FOCS} 2021, Denver, CO, USA, February 7-10, 2022}, pages 327--338.
  {IEEE}, 2021.

\bibitem{AbbLiSly21a}
E.~Abbe, S.~Li, and A.~Sly.
\newblock Binary perceptron: efficient algorithms can find solutions in a rare
  well-connected cluster.
\newblock In {\em {STOC} '22: 54th Annual {ACM} {SIGACT} Symposium on Theory of
  Computing, Rome, Italy, June 20 - 24, 2022}, pages 860--873. {ACM}, 2022.

\bibitem{AchlioptasCR11}
D.~Achlioptas, A.~Coja{-}Oghlan, and F.~Ricci{-}Tersenghi.
\newblock On the solution-space geometry of random constraint satisfaction
  problems.
\newblock {\em Random Struct. Algorithms}, 38(3):251--268, 2011.

\bibitem{AchlioptasR06}
D.~Achlioptas and F.~Ricci{-}Tersenghi.
\newblock On the solution-space geometry of random constraint satisfaction
  problems.
\newblock In {\em Proceedings of the 38th Annual {ACM} Symposium on Theory of
  Computing, Seattle, WA, USA, May 21-23, 2006}, pages 130--139. {ACM}, 2006.

\bibitem{ElAlMont20}
A.~El Alaoui and A.~Montanari.
\newblock Algorithmic thresholds in mean field spin glasses.
\newblock 2020.
\newblock available online at \bl{\url{http://arxiv.org/abs/2009.11481}}.

\bibitem{AlaouiMS21}
A.~El Alaoui, A.~Montanari, and M.~Sellke.
\newblock Optimization of mean-field spin glasses.
\newblock {\em The Annals of Probability}, 49(6), 2021.

\bibitem{AlaouiMS22}
A.~El Alaoui, A.~Montanari, and M.~Sellke.
\newblock Sampling from the {S}herrington-{K}irkpatrick gibbs measure via
  algorithmic stochastic localization.
\newblock In {\em 63rd {IEEE} Annual Symposium on Foundations of Computer
  Science, {FOCS} 2022, Denver, CO, USA, October 31 - November 3, 2022}, pages
  323--334. {IEEE}, 2022.

\bibitem{AlaSel20}
A.~El Alaoui and M.~Sellke.
\newblock Algorithmic pure states for the negative spherical perceptron.
\newblock {\em Journal of Statistical Physics}, 189(27), 2022.

\bibitem{djalt22}
D.~J. Altschuler.
\newblock Critical window of the symmetric perceptron.
\newblock {\em Electron. J. Probab}, 28:1--28, 2023.

\bibitem{AlwLiuSaw21}
R.~Alweiss, Y.~P. Liu, and M.~Sawhney.
\newblock Discrepancy minimization via a self-balancing walk.
\newblock {\em In Proc. 53rd STOC, ACM}, pages 14--20, 2021.

\bibitem{Ama91}
E.~Amaldi.
\newblock On the complexity of training perceptrons.
\newblock In {\em T. Kohonen, K. Makisara, O. Simula, and J. Kangas, editors,
  Artificial neural networks. Elsevier Amsterdam}, pages 55--60, 1991.

\bibitem{ElAlGam24}
A.~El~Alaoui amd D.~Gamarnik.
\newblock Hardness of sampling solutions from the symmetric binary perceptron.
\newblock 2024.
\newblock available online at \bl{\url{http://arxiv.org/abs/2407.16627 }}.

\bibitem{AMZ24}
B.~L. Annesi, E.~M. Malatesta, and F.~Zamponi.
\newblock {Exact full-RSB SAT/UNSAT transition in infinitely wide two-layer
  neural networks}.
\newblock {\em SciPost Phys.}, 18:118, 2025.

\bibitem{AubPerZde19}
B.~Aubin, W.~Perkins, and L.~Zdeborova.
\newblock Storage capacity in symmetric binary perceptrons.
\newblock {\em J. Phys. A}, 52(29):294003, 2019.

\bibitem{BaldassiBBZ07}
C.~Baldassi, A.~Braunstein, N.~Brunel, and R.~Zecchina.
\newblock Efficient supervised learning in networks with binary synapses.
\newblock {\em Proc. Natl. Acad. Sci. {USA}}, 104(26):11079--11084, 2007.

\bibitem{Bald15}
C.~Baldassi, A.~Ingrosso, C.~Lucibello, L.~Saglietti, and R.~Zecchina.
\newblock Subdominant dense clusters allow for simple learning and high
  computational performance in neural networks with discrete synapses.
\newblock {\em Physical Review letters}, 115(12):128101, 2015.

\bibitem{Bald16}
C.~Baldassi, A.~Ingrosso, C.~Lucibello, L.~Saglietti, and R.~Zecchina.
\newblock Local entropy as a measure for sampling solutions in constraint
  satisfaction problems.
\newblock {\em Journal of Statistical Mechanics: Theory and Experiment},
  (2):021301, 2016.

\bibitem{Bald21}
C.~Baldassi, C.~Lauditi, E.~M. Malatesta, G.~Perugini, and R.~Zecchina.
\newblock Unveiling the structure of wide flat minima in neural networks.
\newblock {\em Phys. Rev. Lett.}, 127:278301, Dec 2021.

\bibitem{BMPZ23}
C.~Baldassi, E.~M. Malatesta, G.~Perugini, and R.~Zecchina.
\newblock Typical and atypical solutions in nonconvex neural networks with
  discrete and continuous weights.
\newblock {\em Phys. Rev. E}, 108:024310, Aug 2023.

\bibitem{Bald20}
C.~Baldassi, R.~D. Vecchia, C.~Lucibello, and R.~Zecchina.
\newblock Clustering of solutions in the symmetric binary perceptron.
\newblock {\em Journal of Statistical Mechanics: Theory and Experiment},
  (7):073303, 2020.

\bibitem{BalVen87}
P.~Baldi and S.~Venkatesh.
\newblock Number od stable points for spin-glasses and neural networks of
  higher orders.
\newblock {\em Phys. Rev. Letters}, 58(9):913--916, Mar. 1987.

\bibitem{BanSpen20}
N.~Bansal and J.~H Spencer.
\newblock On-line balancing of random inputs.
\newblock {\em Random Structures \& Algorithms}, 57(4):879--891, 2020.

\bibitem{Barb24}
D.~Barbier.
\newblock {How to escape atypical regions in the symmetric binary perceptron: A
  journey through connected-solutions states}.
\newblock {\em SciPost Phys.}, 18:115, 2025.

\bibitem{BarbAKZ23}
D.~Barbier, A.~El Alaoui, F.~Krzakala, and L.~Zdeborova.
\newblock On the atypical solutions of the symmetric binary perceptron.
\newblock {\em Journal of Physics A: Mathematical and Theoretical},
  57(19):195202, 2024.

\bibitem{BarbierKMMZ18}
J.~Barbier, F.~Krzakala, N.~Macris, L.~Miolane, and L.~Zdeborov{\'{a}}.
\newblock Optimal errors and phase transitions in high-dimensional generalized
  linear models.
\newblock In {\em Conference On Learning Theory, {COLT} 2018, Stockholm,
  Sweden, 6-9 July 2018}, volume~75 of {\em Proceedings of Machine Learning
  Research}, pages 728--731. {PMLR}, 2018.
\newblock available online at \bl{\url{http://arxiv.org/abs/1708.03395}}.

\bibitem{Boet05}
S.~Boettcher and G.~E. Lau.
\newblock Extremal optimization for {S}herrington {K}irkpatrick spin glasses.
\newblock {\em European Physical Journal B}, 46:501--505, 2005.

\bibitem{BoltNakSunXu22}
E.~Bolthausen, S.~Nakajima, N.~Sun, and C.~Xu.
\newblock Gardner formula for {I}sing perceptron models at small densities.
\newblock {\em {P}roceedings of {T}hirty {F}ifth {C}onference on {L}earning
  {T}heory, {PMLR}}, 178:1787--1911, 2022.

\bibitem{BrZech06}
A.~Braunstein and R.~Zecchina.
\newblock Learning by message passing in networks of discrete synapses.
\newblock {\em Physical review letters}, 96(3):030201, 2006.

\bibitem{Cameron60}
S.~H. Cameron.
\newblock Tech-report 60-600.
\newblock {\em Proceedings of the bionics symposium}, pages 197--212, 1960.
\newblock Wright air development division, {D}ayton, {O}hio.

\bibitem{Cover65}
T.~Cover.
\newblock Geomretrical and statistical properties of systems of linear
  inequalities with applications in pattern recognition.
\newblock {\em IEEE Transactions on Electronic Computers}, (EC-14):326--334,
  1965.

\bibitem{Dandietal25}
Y.~Dandi, D.~Gamarnik, F.~Pernice, and L.~Zdeborova.
\newblock Sequential dynamics in {I}sing spin glasses.
\newblock 2025.
\newblock available online at \bl{\url{http://arxiv.org/abs/2506.09877}}.

\bibitem{Das25}
S.~Das, S.~Biswas, and B.~K. Chakrabarti.
\newblock Classical annealing of the sherrington-kirkpatrick spin glass using
  suzuki-kubo mean-field ising dynamics.
\newblock {\em Phys. Rev. E}, 112:014104, Jul 2025.

\bibitem{HMMZ08}
H.~Daude, M.~Mezard, T.~Mora, and R.~Zecchina.
\newblock Pairs of sat-assignments in random boolean formulae.
\newblock {\em Theoretical Computer Science}, 393(1):260--279, 2008.

\bibitem{DingSun19}
J.~Ding and N.~Sun.
\newblock Capacity lower bound for the {I}sing perceptron.
\newblock {\em {STOC} 2019: Proceedings of the 51st Annual {ACM SIGACT}
  {S}ymposium on {T}heory of {C}omputing}, pages 816--827, 2019.

\bibitem{Erba24}
V.~Erba, F.~Behrens, F.~Krzakala, and L.~Zdeborová.
\newblock Quenches in the {S}herrington–{K}irkpatrick model.
\newblock {\em Journal of Statistical Mechanics: Theory and Experiment},
  2024(8):083302, aug 2024.

\bibitem{FraHwaUrb19}
S.~Franz, S.~Hwang, and P.~Urbani.
\newblock Jamming in multilayer supervised learning models.
\newblock {\em Phys. Rev. Lett.}, 123(16):160602, 2019.

\bibitem{FraPar16}
S.~Franz and G.~Parisi.
\newblock The simplest model of jamming.
\newblock {\em Journal of Physics A: Mathematical and Theoretical},
  49(14):145001, 2016.

\bibitem{FPSUZ17}
S.~Franz, G.~Parisi, M.~Sevelev, P.~Urbani, and F.~Zamponi.
\newblock Universality of the {SAT-UNSAT} (jamming) threshold in non-convex
  continuous constraint satisfaction problems.
\newblock {\em SciPost Physics}, 2:019, 2017.

\bibitem{FraSclUrb19}
S.~Franz, A.~Sclocchi, and P.~Urbani.
\newblock Critical jammed phase of the linear perceptron.
\newblock {\em Phys. Rev. Lett.}, 123(11):115702, 2019.

\bibitem{FraSclUrb20}
S.~Franz, A.~Sclocchi, and P.~Urbani.
\newblock Surfing on minima of isostatic landscapes: avalanches and unjamming
  transition.
\newblock {\em SciPost Physics}, 9:12, 2020.

\bibitem{Gamar21}
D.~Gamarnik.
\newblock The overlap gap property: A topological barrier to optimizing over
  random structures.
\newblock {\em Proceedings of the National Academy of Sciences}, 118(41), 2021.

\bibitem{GamAW24}
D.~Gamarnik, A.~Jagannath, and A.~S. Wein.
\newblock Hardness of random optimization problems for {B}oolean circuits,
  low-degree polynomials, and {L}angevin dynamics.
\newblock {\em {SIAM} J. Comput.}, 53(1):1--46, 2024.

\bibitem{GamKizPerXu22}
D.~Gamarnik, E.~C. Kizildag, W.~Perkins, and C.~Xu.
\newblock Algorithms and barriers in the symmetric binary perceptron model.
\newblock In {\em 63rd {IEEE} Annual Symposium on Foundations of Computer
  Science, {FOCS} 2022, Denver, CO, USA, October 31 - November 3, 2022}, pages
  576--587. {IEEE}, 2022.

\bibitem{GamKizPerXu23}
D.~Gamarnik, E.~C. Kizildag, W.~Perkins, and C.~Xu.
\newblock Geometric barriers for stable and online algorithms for discrepancy
  minimization.
\newblock In {\em The Thirty Sixth Annual Conference on Learning Theory, {COLT}
  2023, 12-15 July 2023, Bangalore, India}, volume 195 of {\em Proceedings of
  Machine Learning Research}, pages 3231--3263. {PMLR}, 2023.

\bibitem{GamMZ22}
D.~Gamarnik, C.~Moore, and L.~Zdeborova.
\newblock Disordered systems insights on computational hardness.
\newblock {\em Journal of Statistical Mechanics: Theory and Experiment},
  (11):115015, 2022.

\bibitem{GamarSud14}
D.~Gamarnik and M.~Sudan.
\newblock Limits of local algorithms over sparse random graphs.
\newblock {\em Proceedings of the 5th conference on innovations in theoretical
  computer science}, pages 369--376, 2014.

\bibitem{GamarSud17}
D.~Gamarnik and M.~Sudan.
\newblock Limits of local algorithms over sparse random graphs.
\newblock {\em Ann. Probab.}, 45(4):2353--2376, 2017.

\bibitem{GamarSud17a}
D.~Gamarnik and M.~Sudan.
\newblock Performance of sequential local algorithms for the random {NAE-K-SAT}
  problem.
\newblock {\em SIAM Journal on Computing}, 46(2):590--619, 2017.

\bibitem{Gar88}
E.~Gardner.
\newblock The space of interactions in neural networks models.
\newblock {\em J. Phys. A: Math. Gen.}, 21:257--270, 1988.

\bibitem{GarDer88}
E.~Gardner and B.~Derrida.
\newblock Optimal storage properties of neural networks models.
\newblock {\em J. Phys. A: Math. Gen.}, 21:271--284, 1988.

\bibitem{GutSte90}
H.~Gutfreund and Y.~Stein.
\newblock Capacity of neural networks with discrete synaptic couplings.
\newblock {\em J. Physics A: Math. Gen}, 23:2613, 1990.

\bibitem{Huang24}
B.~Huang.
\newblock Capacity threshold for the {I}sing perceptron.
\newblock In {\em 65th {IEEE} Annual Symposium on Foundations of Computer
  Science, {FOCS} 2024, Chicago, IL, USA, October 27-30, 2024}, pages
  1126--1136. {IEEE}, 2024.

\bibitem{HuangS22}
B.~Huang and M.~Sellke.
\newblock Tight lipschitz hardness for optimizing mean field spin glasses.
\newblock In {\em 63rd {IEEE} Annual Symposium on Foundations of Computer
  Science, {FOCS} 2022, Denver, CO, USA, October 31 - November 3, 2022}, pages
  312--322. {IEEE}, 2022.

\bibitem{Huang14}
H.~Huang and Y.~Kabashima.
\newblock Origin of the computational hardness for learning with binary
  synapses.
\newblock {\em Phys. Rev. E}, 90:052813, 2014.

\bibitem{Huang13}
H.~Huang, K.~Y.~M. Wong, and Y.~Kabashima.
\newblock Entropy landscape of solutions in the binary perceptron problem.
\newblock {\em Journal of Physics A: Mathematical and Theoretical},
  46(37):375002, 2013.

\bibitem{Hubara16}
I.~Hubara, M.~Courbariaux, D.~Soudry, R.~El-Yaniv, and Y.~Bengio.
\newblock Binarized neural networks.
\newblock In {\em Advances in Neural Information Processing Systems 29, NeurIPS
  2016}, 2016.

\bibitem{Joseph60}
R.~D. Joseph.
\newblock The number of orthants in $n$-space instersected by an
  $s$-dimensional subspace.
\newblock {\em Tech. memo 8, project {PARA}}, 1960.
\newblock Cornel aeronautical lab., Buffalo, N.Y.

\bibitem{KaKLO86}
N.~Karmarkar, R.~M. Karp, G.~S Lueker, and A.~M. Odlyzko.
\newblock Probabilistic analysis of optimum partitioning.
\newblock {\em Journal of Applied probability}, 23(3):626--645, 1986.

\bibitem{KimRoc98}
J.~H. Kim and J.~R. Roche.
\newblock Covering cubes by random half cubes with applications to biniary
  neural networks.
\newblock {\em Journal of Computer and System Sciences}, 56:223--252, 1998.

\bibitem{Kiz23}
E.~C. Kizildag.
\newblock Sharp phase transition for multi overlap gap property in ising p-spin
  glass and random k-{SAT} models.
\newblock 2023.
\newblock available online at \bl{\url{http://arxiv.org/abs/2309.09913}}.

\bibitem{KraMez89}
W.~Krauth and M.~Mezard.
\newblock Storage capacity of memory networks with binary couplings.
\newblock {\em J. Phys. France}, 50:3057--3066, 1989.

\bibitem{KMSSZ12a}
F.~Krzakala, M.~M\'ezard, F.~Sausset, Y.~F. Sun, and L.~Zdeborov\'a.
\newblock Probabilistic reconstruction in compressed sensing: algorithms, phase
  diagrams, and threshold achieving matrices.
\newblock {\em Journal of Statistical Mechanics: Theory and Experiment}, page
  P08009, 2012.

\bibitem{LiSch24}
S.~Li and T.~Schramm.
\newblock Some easy optimization problems have the overlap-gap property.
\newblock 2020.
\newblock available online at \bl{\url{http://arxiv.org/abs/2411.01836 } }.

\bibitem{LiSZ24}
S.~Li, T.~Schramm, and K.~Zhou.
\newblock Discrepancy algorithms for the binary perceptron.
\newblock In {\em Proceedings of the 57th Annual {ACM} Symposium on Theory of
  Computing, {STOC} 2025, Prague, Czechia, June 23-27, 2025}, pages 1668--1679.
  {ACM}, 2025.

\bibitem{LovMek15}
S.~Lovett and R.~Meka.
\newblock Constructive discrepancy minimization by walking on the edges.
\newblock {\em SIAM Journal on Computing}, 44(5):1573--1582, 2015.

\bibitem{MMZ05}
M.~Mezard, T.~Mora, and R.~Zecchina.
\newblock Clustering of solutions in the random satisfiability problem.
\newblock {\em Physical Review Letters}, 94:197204, 2005.

\bibitem{Montanari19}
A.~Montanari.
\newblock Optimization of the {S}herrington-{K}irkpatrick hamiltonian.
\newblock In {\em 60th {IEEE} Annual Symposium on Foundations of Computer
  Science, {FOCS} 2019, Baltimore, Maryland, USA, November 9-12, 2019}, pages
  1417--1433. {IEEE} Computer Society, 2019.

\bibitem{NakSun23}
S.~Nakajima and N.~Sun.
\newblock Sharp threshold sequence and universality for {I}sing perceptron
  models.
\newblock {\em {P}roceedings of the 2023 {A}nnual {ACM-SIAM} {S}ymposium on
  {D}iscrete {A}lgorithms ({SODA})}, pages 638--674, 2023.

\bibitem{PerkXu21}
W.~Perkins and C.~Xu.
\newblock Frozen 1-{RSB} structure of the symmetric {I}sing perceptron.
\newblock {\em {STOC} 2021: Proceedings of the 53rd Annual {ACM SIGACT}
  {S}ymposium on {T}heory of {C}omputing}, pages 1579--1588, 2021.

\bibitem{RahVir17}
M.~Rahman and B.~Virag.
\newblock Local algorithms for independent sets are half-optimal.
\newblock {\em Annals of Probability}, 45(3), 2017.

\bibitem{Roth17}
T.~Rothvoss.
\newblock Constructive discrepancy minimization for convex sets.
\newblock {\em SIAM Journal on Computing}, 46(1):224--234, 2017.

\bibitem{SahSaw23}
A.~Sah and M.~Sawhney.
\newblock Distribution of the threshold for the symmetric perceptron.
\newblock In {\em 2023 IEEE 64th Annual Symposium on Foundations of Computer
  Science (FOCS)}, pages 2369--2382, 2023.

\bibitem{Schlafli}
L.~Schlafli.
\newblock {\em Gesammelte Mathematische AbhandLungen I}.
\newblock Basel, Switzerland: Verlag Birkhauser, 1950.

\bibitem{SchTir02}
M.~Shcherbina and B.~Tirozzi.
\newblock On the volume of the intrersection of a sphere with random half
  spaces.
\newblock {\em C. R. Acad. Sci. Paris. Ser I}, (334):803--806, 2002.

\bibitem{SchTir03}
M.~Shcherbina and B.~Tirozzi.
\newblock Rigorous solution of the {G}ardner problem.
\newblock {\em Comm. on Math. Physics}, (234):383--422, 2003.

\bibitem{SheKir72}
D.~Sherrington and S.~Kirkpatrick.
\newblock Solvable model of a spin-glass.
\newblock {\em Phys. Rev. Lett.}, 35:1792--1796, Dec 1975.

\bibitem{Spen85}
J.~Spencer.
\newblock Six standard deviations suffice.
\newblock {\em Transactions of the American mathematical society},
  289(2):679--706, 1985.

\bibitem{StojnicGardGen13}
M.~Stojnic.
\newblock Another look at the {G}ardner problem.
\newblock 2013.
\newblock available online at \bl{\url{http://arxiv.org/abs/1306.3979}}.

\bibitem{StojnicDiscPercp13}
M.~Stojnic.
\newblock Discrete perceptrons.
\newblock 2013.
\newblock available online at \bl{\url{http://arxiv.org/abs/1303.4375}}.

\bibitem{StojnicGardSphNeg13}
M.~Stojnic.
\newblock Negative spherical perceptron.
\newblock 2013.
\newblock available online at \bl{\url{http://arxiv.org/abs/1306.3980}}.

\bibitem{Stojnicsflgscompyx23}
M.~Stojnic.
\newblock Bilinearly indexed random processes -- {\emph{stationarization}} of
  fully lifted interpolation.
\newblock 2023.
\newblock available online at \bl{\url{http://arxiv.org/abs/2311.18097}}.

\bibitem{Stojnicbinperflrdt23}
M.~Stojnic.
\newblock Binary perceptrons capacity via fully lifted random duality theory.
\newblock 2023.
\newblock available online at \bl{\url{http://arxiv.org/abs/2312.00073}}.

\bibitem{Stojnicnegsphflrdt23}
M.~Stojnic.
\newblock Fl {RDT} based ultimate lowering of the negative spherical perceptron
  capacity.
\newblock 2023.
\newblock available online at \bl{\url{http://arxiv.org/abs/2312.16531}}.

\bibitem{Stojnicnflgscompyx23}
M.~Stojnic.
\newblock Fully lifted interpolating comparisons of bilinearly indexed random
  processes.
\newblock 2023.
\newblock available online at \bl{\url{http://arxiv.org/abs/2311.18092}}.

\bibitem{Stojnicflrdt23}
M.~Stojnic.
\newblock Fully lifted random duality theory.
\newblock 2023.
\newblock available online at \bl{\url{http://arxiv.org/abs/2312.00070}}.

\bibitem{Stojnicclupsk25}
M.~Stojnic.
\newblock A {CLuP} algorithm to practically achieve $\sim 0.76$ {SK}--model
  ground state free energy.
\newblock 2025.
\newblock available online at \bl{\url{http://arxiv.org/abs/2507.09247}}.

\bibitem{Stojniccluphop25}
M.~Stojnic.
\newblock {CLuP} practically achieves $\sim 1.77$ positive and $\sim 0.33$
  negative hopfield model ground state free energy.
\newblock 2025.
\newblock available online at \bl{\url{http://arxiv.org/abs/2507.22396}}.

\bibitem{Stojnicabple25}
M.~Stojnic.
\newblock Rare dense solutions clusters in asymmetric binary perceptrons --
  local entropy via fully lifted {RDT}.
\newblock 2025.
\newblock available online at \bl{\url{http://arxiv.org/abs/2506.19276}}.

\bibitem{Subag17}
E~Subag.
\newblock The complexity of spherical p-spin models - {A} second moment
  approach.
\newblock {\em Ann. Probab.}, 45:3385 -- 3450, 2017.

\bibitem{Subag17a}
E~Subag.
\newblock The geometry of the gibbs measure of pure spherical spin glasses.
\newblock {\em Inventiones Mathematicae}, 210:135 -- 209, 2017.

\bibitem{Subag21}
E~Subag.
\newblock Following the ground states of full-rsb spherical spin glasses.
\newblock {\em Comm. Pure Appl. Math.}, 74:1021--1044, 2021.

\bibitem{Subag24}
E~Subag.
\newblock Free energy landscapes in spherical spin glasses.
\newblock {\em Duke Math. J.}, 173:1291 -- 1357, 2024.

\bibitem{Tal05}
M.~Talagrand.
\newblock {\em The Generic Chaining}.
\newblock Springer-Verlag, 2005.

\bibitem{Talbook11b}
M.~Talagrand.
\newblock {\em Mean field models and spin glasse: {V}olume {II}}.
\newblock A series of modern surveys in mathematics 55, Springer-Verlag, Berlin
  Heidelberg, 2011.

\bibitem{Talbook11a}
M.~Talagrand.
\newblock {\em Mean field models and spin glasses: {V}olume {I}}.
\newblock A series of modern surveys in mathematics 54, Springer-Verlag, Berlin
  Heidelberg, 2011.

\bibitem{Ven86}
S.~Venkatesh.
\newblock Epsilon capacity of neural networks.
\newblock {\em Proc. Conf. on Neural Networks for Computing, Snowbird, UT},
  1986.

\bibitem{Wein22}
A.~S Wein.
\newblock Optimal low-degree hardness of maximum independent set.
\newblock {\em Mathematical Statistics and Learning}, 4(3):221--251, 2022.

\bibitem{Wendel62}
J.~G. Wendel.
\newblock A problem in geometric probability.
\newblock {\em Mathematica Scandinavica}, 1:109--111, 1962.

\bibitem{Winder61}
R.~O. Winder.
\newblock Single stage threshold logic.
\newblock {\em Switching circuit theory and logical design}, pages 321--332,
  Sep. 1961.
\newblock AIEE Special publications S-134.

\bibitem{Winder}
R.~O. Winder.
\newblock {\em Threshold logic}.
\newblock Ph. D. dissertation, Princetoin University, 1962.

\bibitem{CXu21}
C.~Xu.
\newblock Sharp threshold for the {I}sing perceptron model.
\newblock {\em Ann. Probab.}, 43(5):2399--2415, 2021.

\end{thebibliography}
\end{singlespace}

\end{document}